\documentclass{article}

\usepackage{graphicx}
\usepackage{placeins}
\graphicspath{{figures/}}

\usepackage{hyperref}

\usepackage[accepted]{icml2026}

\usepackage{amsmath}
\usepackage{amssymb}
\usepackage{mathtools}
\usepackage{amsthm}

\usepackage{cleveref}

\usepackage[utf8]{inputenc} 
\usepackage{hyperref}       
\usepackage[T1]{fontenc}    
\usepackage{url}            
\usepackage{booktabs}       
\usepackage{amsfonts}       
\usepackage{nicefrac}       
\usepackage{microtype}      
\usepackage{xcolor}         
\usepackage{siunitx}        

\usepackage[normalem]{ulem} 

\usepackage{times}

\usepackage{bbm} 

\usepackage{wrapfig}
\definecolor{mintgreen}{rgb}{0.6, 1.0, 0.6}

\usepackage{comment}
\usepackage{algorithm}

\usepackage{tabularx} 
\usepackage{subcaption}

\usepackage{tikz}
\usetikzlibrary{shapes.geometric, arrows.meta}
\usetikzlibrary{calc,arrows,intersections}
\usetikzlibrary{arrows.meta, positioning}  

\tikzset{  
  vec/.style={rectangle, draw, minimum width=0.5cm, minimum height=2cm},
  net/.style={rectangle, rounded corners=5pt, draw, thick, minimum width=4cm, minimum height=4cm},
  arrow/.style={-{Latex}, thick},
}

\usepackage{enumitem} 

\theoremstyle{plain}
\newtheorem{theorem}{Theorem}[section]

\newtheorem{corollary}[theorem]{Corollary}
\theoremstyle{definition}

\theoremstyle{remark}

\usepackage[textsize=tiny]{todonotes}
\usepackage{float}

\icmltitlerunning{Hide\&Seek: Learning to Explain in an End-to-End Differentiable Network}

\newcommand{\V}[1]{\boldsymbol{\mathbf{#1}}}

\definecolor{CmntColor}{rgb}{0.38, 0.30, 0.30}

\newcommand{\X}{{\V{X}}}
\newcommand{\x}{{\V{x}}}
\newcommand{\Y}{{\V{Y}}}
\newcommand{\y}{{\V{y}}}
\newcommand{\Z}{{\V{Z}}}
\newcommand{\z}{{\V{z}}}
\newcommand{\m}{{\V{m}}}

\newcommand{\I}{{\mathcal{I}}}

\renewcommand{\I}{{\mathcal{I}}}

\renewcommand{\S}{{\mathcal{S}}}
\newcommand{\M}{{\mathcal{M}}} 

\newcommand{\indep}{\perp\!\!\!\perp}
\newcommand{\notindep}{\mathrel{\not\!\indep}}

\newcommand{\Px}{P_{\cdot|\mathbb{X}}}

\newcommand{\Pz}{P_{\cdot|\mathbb{Z}}}
\newcommand{\xhj}{\hat{x}_j}

\newcommand{\lx}{\ell_{\mathbb{X}}}
\newcommand{\lz}{\ell_{\mathbb{Z}}}

\newcommand{\xm}{\V{\hat{x}}}

\newcommand*{\QED}{\null\nobreak\hfill\ensuremath{\square}}%

\begin{document}

\twocolumn[
  \icmltitle{Hide\&Seek: Learning to Explain in an End-to-End Differentiable Network}



  \icmlsetsymbol{equal}{*}

  \begin{icmlauthorlist}
    \icmlauthor{Tal Ellinson}{aff1}
    \icmlauthor{Hadi Mohasel Afshar}{aff1}
    \icmlauthor{Sally Cripps}{aff1}
  \end{icmlauthorlist}

  \icmlaffiliation{aff1}{Human Technology Institute; 
  School of Mathematical and Physical Sciences, 
  University of Technology Sydney, Australia}

  \icmlcorrespondingauthor{Tal Ellinson}{tal.ellinson@student.uts.edu.au}

  \icmlkeywords{Machine Learning, ICML, Feature Importance, Instance Wise Feature Importance, Deep Learning, Explainability, XAI}

  \vskip 0.3in
]



\printAffiliationsAndNotice{}  

\begin{abstract}
  Instance-wise feature selection is a valuable tool for interpreting labeled data and the predictions of black-box models. In contrast to global feature selection techniques, instance-wise methods dynamically identify important features for each instance. A growing number of methods learn a \emph{selector}, which identifies important features, and a \emph{predictor}, which uses these to make predictions. However, these pioneering methods face challenges including information leakage and lack of differentiability, which can slow training. In this paper, we present Hide\&Seek, an end-to-end differentiable model for instance-wise feature selection. We jointly learn feature selection and prediction under a single objective without information leakage. Hide\&Seek outperforms existing state-of-the-art models across a range of experiments and is fast to train. We achieve this by reformulating feature removal as a differentiable operation where instead of discretely removing features, we replace a proportion of each feature. Training is further stabilized via a parsimony-weight annealing framework.
\end{abstract}

\section{Introduction and Context}
\label{sect:introduction}

Feature selection (FS) techniques help identify informative variables in input-output systems. They are used widely to interpret labeled datasets and explain the predictions of black-box models. For example, in genomics, FS has been used to analyze high-dimensional gene expression data and identify genes relevant to cancer diagnosis \citep{unger2024deep_cancer_genomics}. Black-box models are now being used across domains from medical research \citep{Kogan2020StrokeSeverity} to finance \citep{rudin2023globally} to the education sector \citep{levy2021algorithms_public_sector}. As public understanding of model risk rises, decision makers increasingly require reasons behind model predictions. By identifying the important input variables, FS helps peer inside a black box and present a reason for a prediction, which can be sense-checked against a policy-maker's expertise. Beyond interpretability, it can simplify models and reduce overfitting by removing unnecessary features. Thus, FS is a key driver of scientific discovery, increased transparency and improved model performance.

There is an established literature on FS, which spans traditional \citep{liu1996probabilistic, kohavi1997wrappers, guyon2003introduction} and deep-learning \citep{covert2021explaining, xu2025interpretability} methods. Historically, techniques focused on global importance, identifying features with high predictive contribution across a whole dataset \citep{tibshirani1996lasso, louppe2013understanding}. 
However, there are limitations in such an approach. For example, diagnostic pathways can differ between patients and educational policies might be more or less effective for different student subpopulations. Recent years have given rise to algorithms for instance-wise feature selection (IWFS), which ascribe feature importance for each instance (e.g. patient or student) in a dataset. 

A common approach of IWFS algorithms is to remove features from a prediction model and thereby quantify their importance \citep{covert2021explaining}. A pioneering method is LIME \citep{ribeiro2016why}, which perturbs features in the neighborhood of the input and fits a linear surrogate model to explain the prediction. SHAP \citep{lundberg2017unified} uses a game-theoretic framework to estimate the average marginal contribution of a feature across all feature permutations. These are both \emph{additive feature attribution methods}, where predictions are decomposed into contributions from individual features.

A range of approaches exist and their classifications are not always mutually exclusive. Saliency methods include Integrated Gradients and DeepLIFT, which measure change in the output relative to a baseline for each input feature \citep{sundararajan2017axiomatic, shrikumar2017learning}. Masking techniques include \citet{dabkowski2017real} for image classification and \citet{crabbe2021explaining} for time-series analysis. Other work leverages attention mechanisms \citep{arik2021tabnet, choi2016retain} and causal framings \citep{schwab2019cxplain}. See \citet{covert2021explaining} for a useful taxonomy.

\paragraph{Related Work.}
The class of algorithms most closely related to our work train a \emph{selector}, which identifies important features, and a \emph{predictor}, which uses the chosen features to make predictions. These methods include L2X \citep{chen2018learning}, INVASE \citep{yoon2018invase} and REAL-x \citep{jethani2021have}.\footnote{Referred to as amortized explanation methods (AEMs) in \citet{jethani2021have}.} Let $(\X,Y)$ denote the input features and corresponding output labels. L2X aims to maximize the mutual information between $\X_S$, a subset of important features, and $Y$. Conversely, INVASE and REAL-x seek to minimize the KL divergence between the distributions $p(Y \mid \X)$  and $p(Y \mid \X_S)$. In practice, these methods approximate conditioning on $\X_S$ by using an ablated version of the full input vector.

In all three methods, the selector learns probabilities for feature inclusion, which are used to sample features sent to the predictor. This discrete sampling is a non-differentiable operation and presents a challenge that each technique handles separately. L2X uses a Gumbel–Softmax relaxation for subset sampling \citep{maddison2016concrete, jang2017gumbel}. This makes their architecture end-to-end differentiable but introduces a key limitation: the number of important features must be decided in advance and must be the same for all instances. INVASE uses an actor–critic methodology from reinforcement learning \citep{peters2008natural} to bypass the need for differentiability. This produces good results but makes the model slow to train. REAL-x employs REBAR gradients to approximate the discrete distribution with its continuous relaxation \citep{tucker2017rebar}.


\begin{table}[t]
\centering
\small
\setlength{\tabcolsep}{4pt}
\renewcommand{\arraystretch}{1.2}
\caption{Comparison of similar instance-wise feature selection models. All four models use a selector--predictor framework.}
\begin{tabular}{>{\raggedright\arraybackslash}p{2.5cm}cccc}
\hline
 & \textbf{Hide\&Seek} & \textbf{REAL-x} & \textbf{INVASE} & \textbf{L2X}\\
\hline
Adaptive number of important features
& \checkmark & \checkmark & \checkmark & \texttimes \\

End-to-end differentiable 
& \checkmark & \texttimes & \texttimes & \checkmark \\

Avoids information leakage 
& \checkmark & \checkmark & \texttimes & \texttimes \\

\hline
\end{tabular}
\label{tab:model_comparisons}
\end{table}

\paragraph{Information Leakage.} A challenge faced by many ablation-based explanation methods, including L2X \citep{chen2018learning} and INVASE \citep{yoon2018invase}, is information leakage \citep{jethani2021have}. In jointly trained selector--predictor models, this issue can arise when unselected features are replaced with an ablated value, typically zero \citep{chen2018learning, yoon2018invase, jethani2021have}, or with a mean feature value \citep{imrie2022composite}. The predictor may learn to exploit the resulting ablation pattern, rather than relying only on the values of the selected features. In this paper, we study this failure mode in the presence of \emph{switch features}. A feature is a switch feature when its induced partition, rather than its exact value, determines the response. This can occur directly, or indirectly by governing the importance of other features.

As an example, consider a switch feature $X_1$, \emph{temperature}, which influences $Y$, \emph{cafe profit}, through the other features. When $X_1 < 20^\circ \mathrm{C}$, let $Y$ be a function of \emph{coffee sales} and \emph{hot chocolate sales}. When $X_1 \geq 20^\circ \mathrm{C}$, let $Y$ be a function of \emph{coffee sales}, \emph{iced coffee sales} and \emph{iced tea sales}. In all instances, $X_1$ is important. However, a jointly trained selector and predictor network whose goal is to minimize the number of important features can encode $X_1 = 0$ as a signal for the partition $X_1 \geq 20^\circ \mathrm{C}$. This allows the joint system to ablate $X_1$ to $0$ for all instances of warm (or cold) weather, while preserving prediction accuracy. The end result is that $X_1$ is not identified as important in a significant number of instances.
Leakage can also occur when a switch feature impacts $Y$ directly. Consider an alternative scenario where a switch feature $X_1$, \emph{temperature}, directly determines $Y$, \emph{ice cream sales}. Say that when $X_1 \geq 25^\circ \mathrm{C}$, sales are made and when $X_1 <25^\circ \mathrm{C}$, they are not. As above, information leakage can occur by ablating $X_1$ to 0 for all instances of warm (or cold) weather.

\citet{jethani2021have} were the first to identify some cases of information leakage arising from the joint training of selector--predictor networks. 
They address it with REAL-x by decoupling training: the predictor is first trained independently to predict $Y$ from \emph{randomly} ablated feature vectors. The selector network is then optimized to provide a minimal subset of important features that preserves prediction accuracy. Because of this, REAL-x avoids information leakage. However, the disjoint training of the predictor network creates two challenges: training the full model is not end-to-end differentiable and the predictor must be sufficiently expressive to predict optimally under arbitrary subsets of features. This is difficult as the number of possible selected features grows exponentially with the dimensionality of the data. The challenges faced by these related methods are summarized in \Cref{tab:model_comparisons} and motivate our model.

We prove information leakage in Appendix~\ref{appendix:theorem.proof} and derive a lower bound on the achievable misidentification rate in Corollary~\ref{corollary:corol1}. Appendix~\ref{app:comparison_jethani} compares our treatment of information leakage to that in \citet{jethani2021have}.

\begin{figure*}[t]
    \centering
    \begin{tikzpicture}
        \node[anchor=south west,inner sep=0] (img) at (0,0) 
            {\includegraphics[
                trim=0cm 4cm 0cm 1mm,
                clip,
                width=\linewidth
            ]{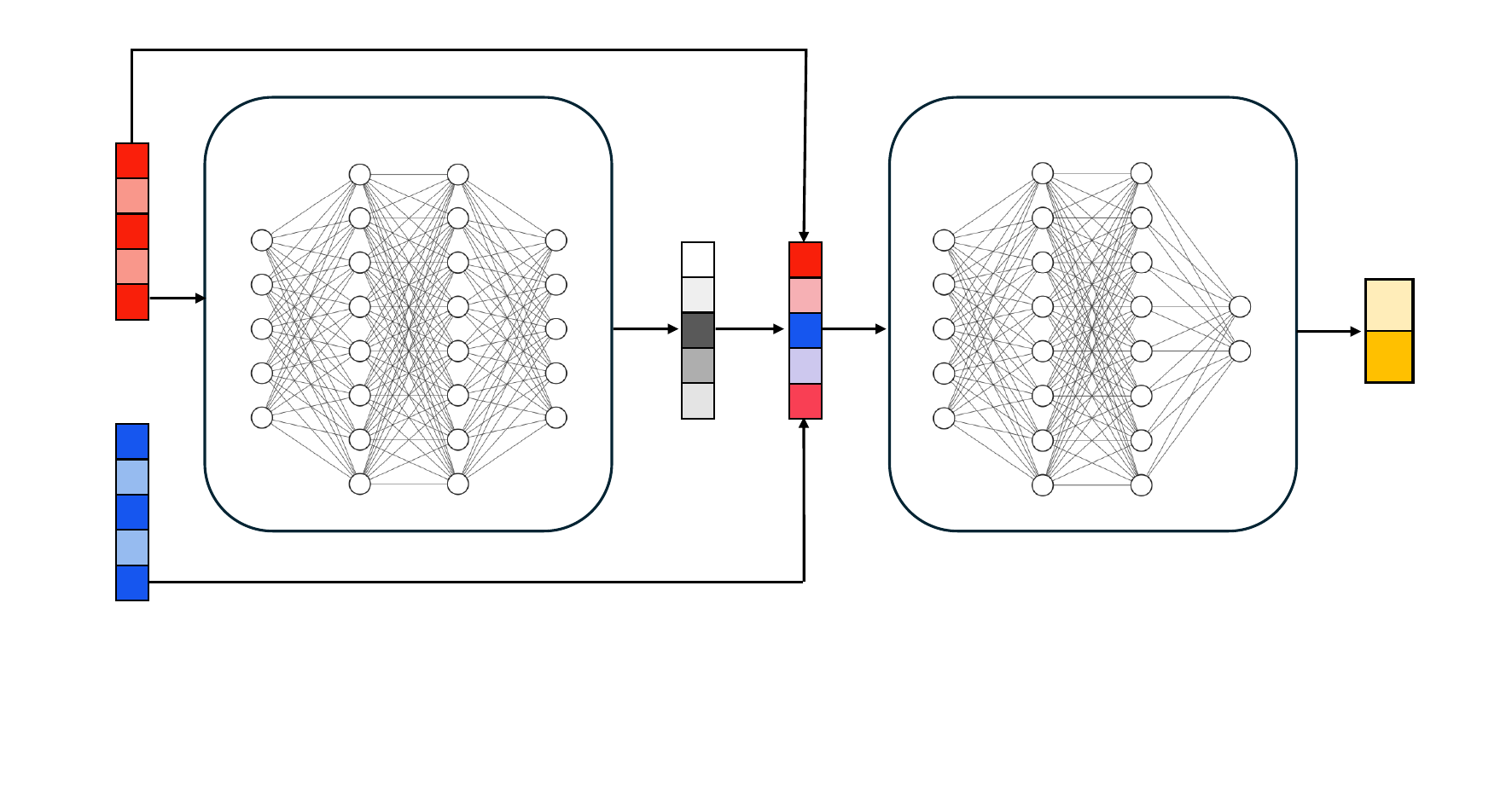}};
        \begin{scope}[x={(img.south east)}, y={(img.north west)}]
            \node at (0.09,0.44) {$\x$};
            \node at (0.09,-0.03) {$\xm$};
            \node at (0.46,0.26) {$\m$};
            \node at (0.55,0.26) {$\z$};
            \node at (0.92,0.32) {$\hat{\y}$};
            \node at (0.27,0.80) {Hide};
            \node at (0.72,0.80) {Seek};
        \end{scope}
    \end{tikzpicture}
    \caption{Hide\&Seek architecture.}
    \label{fig:model_infrastructure}
\end{figure*}

\paragraph{Contributions.} 
In this paper, we make four contributions:\\
(1) We introduce Hide\&Seek, a novel method for instance-wise feature selection. The selector and predictor are jointly trained in an end-to-end differentiable framework. We achieve differentiability by replacing a proportion of each feature, instead of discretely removing a feature subset. The model is fast to train and performs strongly in a range of experiments. \\
(2) We provide a mechanism for preventing information leakage while preserving joint training. Instead of replacing unselected features with a fixed ablation value, Hide\&Seek uses random draws from the feature distribution. This prevents the predictor from exploiting the ablation as an extra source of information. \\
(3) We stabilize the joint training of the selector and predictor modules with a parsimony-weight annealing schedule, which markedly improves performance. The schedule allows the model to focus on predictive performance early in training and increase parsimony in the later stages.\\
(4) We study information leakage in a setting more general than that previously formalized \citep{jethani2021have}. We show that the typical loss function can be an ill-posed proxy for the intended loss and can favour omitting genuinely important features 
(Appendix~\ref{appendix:theorem.proof}). We derive a lower bound on the achievable feature-importance misidentification rate 
(Corollary~\ref{corollary:corol1}), and our empirical results show that the observed misidentification rate is often close to this bound in practice.

This paper has the following structure. In \Cref{sect:problem_formulation}, we formalize the mathematical foundation of our method. In \Cref{sect:proposed_model}, we describe the structure of our model. In \Cref{sect:experiments}, we evaluate Hide\&Seek by comparing it against several benchmarks.

\section{Problem Formulation}
\label{sect:problem_formulation}

Consider a $d$-dimensional feature space $\mathcal{X} = \mathcal{X}_1 \times \dots \times \mathcal{X}_d$ paired with an output space $\mathcal{Y}$.
Let $\mathbf{x} := (x_1, \ldots, x_d)$ and $y$ be realizations of the random vector 
$\mathbf{X}:=(X_1, \ldots, X_d) \in \mathcal{X}$ and random variable $Y \in \mathcal{Y}$, respectively. Let $D := \{1, \ldots, d\}$.

In instance-wise feature selection, the aim is to find a minimal subset $S \subseteq D$ per realization $\mathbf{x}$ that indexes the important features for predicting the label $y$. Let the reduced vector containing selected features be $\mathbf{x}_S$. We seek a minimum $|S|$ satisfying: 
\begin{equation}
\label{eq:equal_distribution}
    p(y \mid \mathbf{x}_S) = p(y \mid \mathbf{x}).
\end{equation}

However, modeling $p(y \mid \mathbf{x}_S)$ is non-trivial as the dimensionality of the reduced vector $\mathbf{x}_S$ changes between instances. 
A common strategy is to approximate $\mathbf{x}_S$ by a fixed-size vector 
$\mathbf{z} := (z_1, \ldots, z_d)$ in which  
\begin{align}
\label{eq:z_traditionally}
z_j =
\begin{cases}
x_j, & \text{if } j \in S, \\
\hat{x}_j, & \text{if } j \notin S.
\end{cases}
\end{align}
where $j\in\{1,\dots,d\}$ and $\hat{\mathbf{x}} := (\hat{x}_1, \ldots, \hat{x}_d) \in \mathbb{R}^d$ represents uninformative replacement values such as $\mathbf{0}$ 
\citep{chen2018learning, yoon2018invase, tagaris2020hide, jethani2021have} or a vector of feature means \citep{imrie2022composite}.

This approach introduces two problems that were discussed earlier. Specifically, removing features with \cref{eq:z_traditionally} requires discrete feature selection, which is non-differentiable, and ablation to fixed values, $\hat{x}_j$, can be exploited by a jointly trained selector and predictor network to produce information leakage. To address these limitations, we reformulate both the selection mechanism and the replacement strategy.

\paragraph{Selection Mechanism.}
To enable differentiable feature selection, we propose constructing $\mathbf{z}$ with a linear combination of the original signal, $x_j$, and a stochastic replacement value, $\hat{x}_j$: 
\begin{equation}
\label{eq:z_ours}
z_j = m_j x_j + (1-m_j)\hat{x}_j
\end{equation}
where $m_j\in[0,1]$ and a higher $m_j$ represents a higher likelihood that $j \in S$. In other words, instead of replacing a proportion of features, we replace a proportion of each feature.

Under this setting, the instance-wise feature selection task is equivalent to finding the most parsimonious real-valued mask vector $\mathbf{m} \in [0, 1]^d$ such that 
\begin{align}
    &\mathbf{z} = \mathbf{m} \odot \mathbf{x} + (1-\mathbf{m}) \odot \hat{\mathbf{x}}, \text{ and} \label{eq:z_mask_definition} \\
    &p(y \mid \mathbf{z}) = p(y \mid \mathbf{x})
\end{align}

\paragraph{Replacement Strategy.}
If the modified signal, $\z$, is distributed according to the original feature distribution, i.e., $p(\Z) = p(\X)$, then the predictor cannot distinguish between the original and replaced features and information leakage becomes impossible. Consider the case where $\mathbf{m} \in \{0, 1\}^d$ is binary. In this context, $\hat{\x} =(\x_{S}, \hat{\x}_{\bar{S}})$, where $\bar{S} := D \backslash S$. No leakage is guaranteed when replacement values are drawn from the conditional distribution:\footnote{See Appendix~\ref{app:comparing_sampling_distributions} for full details.}

\begin{equation}
\label{eq:draw_joint}
    \hat{\x}_{\bar{S}} \sim p(\mathbf{X}_{\bar{S}} \mid \mathbf{x}_S)
\end{equation}

However, accurately modeling and sampling from this conditional distribution is non-trivial. In practice, we sample replacement values from the product of the marginal distributions, akin to unary Quantitative Input Influence \citep{datta2016algorithmic}:
\begin{equation}
\label{eq:draw_product_marginals}
    \hat{\mathbf{x}} \sim \prod_{i\in D }{p}(X_i)
\end{equation}

Note that we write $\hat{\mathbf{x}}$ rather than $\hat{\x}_{\bar{S}}$ because with our continuous relaxation, all features have a proportion selected (and unselected) via the continuous masks $\mathbf{m} \in [0, 1]^d$.

Marginal sampling is efficient to implement and is significantly harder to exploit than fixed-value ablation. For leakage to occur in fixed-value ablation, the predictor network simply needs to learn the significance of a single ablated value. For leakage to occur under marginal sampling, the predictor network must first learn the true joint distribution of the data, and then recognize when stochastic replacement values are out-of-distribution. Our experiments show the absence of leakage under marginal replacement, including in highly correlated settings. In Appendix~\ref{app:conditional_replacement_alternatives}, we also explore an alternative method to approximate the conditional distribution.

\subsection{Optimization}
\label{subsec:optimization}

As in \citet{yoon2018invase}, we relax \eqref{eq:equal_distribution} using the KL (Kullback-Leibler) divergence. 
Let $\mathcal{S} : \mathcal{X} \to 2^{\{1, \ldots, d\}}$ be a selector function that maps each input instance $\x$ to an instance-specific subset of selected feature indices. For notational convenience, we write $\X_S := \{X_i\}_{i \in \S(\x)}$.
Minimizing the KL divergence between $p(Y \mid \X)$ and $p(Y \mid \X_S)$ is equivalent to minimizing the cross-entropy with respect to the selector function $\mathcal{S}$:
\begin{align}
&\min_{\mathcal{S}} \mathbb{E}_{\X} \left[ \mathrm{KL} \left( p(Y \mid \X) \,\|\, p(Y \mid \X_S) \right) \right] \nonumber \\
&= \min_{\mathcal{S}} \mathbb{E}_{\X} \mathbb{E}_{Y \mid \X} \left[ \log \frac{p(Y \mid \X)}{p(Y \mid \X_S)} \right] \nonumber \\
&= \min_{\mathcal{S}} \mathbb{E}_{\X} \mathbb{E}_{Y \mid \X} \left[ \log p(Y \mid \X) - \log p(Y \mid \X_S) \right] \nonumber \\
&= \min_{\mathcal{S}} \underbrace{\mathbb{E}_{\X} \mathbb{E}_{Y \mid \X} [\log p(Y \mid \X)]}_{\text{constant w.r.t. } \mathcal{S}} \nonumber
\\ &\quad + \min_{\mathcal{S}} \mathbb{E}_{\X} \mathbb{E}_{Y \mid \X} [-\log p(Y \mid \X_S)] \nonumber \\
&\equiv \min_{\mathcal{S}} \mathbb{E}_{\X} \mathbb{E}_{Y \mid \X} [-\log p(Y \mid \X_S)].
\end{align}

In practice, the true conditional distribution of $p(Y\mid \X_S)$ is unknown and is approximated by the model induced distribution $p_\theta(Y\mid \X_S)$. This allows us to define the following loss: 

\begin{align}
\mathcal{L}(\mathcal{S}, \theta)
&=
\mathbb{E}_{(\X,Y)\sim p}
\left[
-\log p_{\theta}\!\left(Y \mid \X_{S}\right)
\right]
\nonumber
\\& \quad \;+\;
\lambda\,\mathbb{E}_{\X\sim p_X}\!\left[\|\mathcal{S}(\X)\|\right]
\label{eq:loss_function_theory}
\end{align}

where $\|\cdot\|$ represents the $l_1$ norm, encouraging sparse outputs from $\mathcal{S}$. $\lambda$ is a regularization hyperparameter that balances minimizing the KL divergence against sparsity. For classification, the negative log-likelihood reduces to the standard cross-entropy loss.

\section{Proposed Model}
\label{sect:proposed_model}

Take a dataset $\mathcal{D} = \{( \mathbf{x}^{(i)}, \mathbf{y}^{(i)} ) \}_{i=1}^n$ with $n$ i.i.d. samples drawn from the joint distribution $p(\X,\Y)$, where $\y \in \{0,1\}^C$ represents the one-hot encoding of the target variable.\footnote{We focus on classification. In our experiments, $C=2$, except in experiment~\ref{subsec:tcga} where $C=4$. For regression, under a fixed-variance Gaussian likelihood, the cross-entropy term in the loss can be replaced with the mean-squared error.}
The Hide\&Seek architecture consists of two feed-forward fully connected neural network modules: \emph{Hide} and \emph{Seek}.\footnote{This naming convention was first introduced in \citet{tagaris2020hide}.}
As shown in Figure~\ref{fig:model_infrastructure},
\emph{Hide} takes an input vector $\x$ and finds a continuous mask vector $\m \in [0,1]^d$, representing the importance of $d$ features. 
$\z$ is determined by \eqref{eq:z_mask_definition} and \eqref{eq:draw_product_marginals} and \emph{Seek} maps it to the predicted output $\hat{\y}$.
With these choices, the loss function~\eqref{eq:loss_function_theory} becomes:

\begin{equation}
\ell(\V{\alpha}, \V{\beta}) 
= \mathbb{E}_{
    \substack{
        (\x, \y) \sim p(\X,\Y) \\ 
        \hspace{-2mm}\xm \sim \prod_{i\in D}{p}(X_i) }
    }
\left[
 - \sum_{c=1}^C y_{c} \log \hat{y}_{c} 
+ \frac{\lambda}{d}  \|\m\|_1
\right]
\label{eq:loss_hide_seek_no_t}
\end{equation}
\begin{align*}
\text{where \quad} \m &= \text{Hide}_{\alpha}(\x) \\
\z &= \m \odot \x + (1 - \m) \odot \xm  \\
\hat{\y} &= \text{Seek}_{\V{\beta}}(\z)
\end{align*}

In summary, feature importance is achieved without discrete sampling of features to retain or replace. Rather, using continuous masks, we replace a proportion of each feature with marginal noise, enabling end-to-end differentiability and the joint training of our network. 

Further details on the model architecture are provided in Appendix~\ref{app:model_infrastructure}.\footnote{Our code is available at \url{https://github.com/talellinson/hide-and-seek-icml2026}.}

\subsection{Parsimony-Weight Annealing}
\label{subsec:parsimony_annealing}

\begin{figure}
    \centering
    \includegraphics[width=0.9\linewidth]{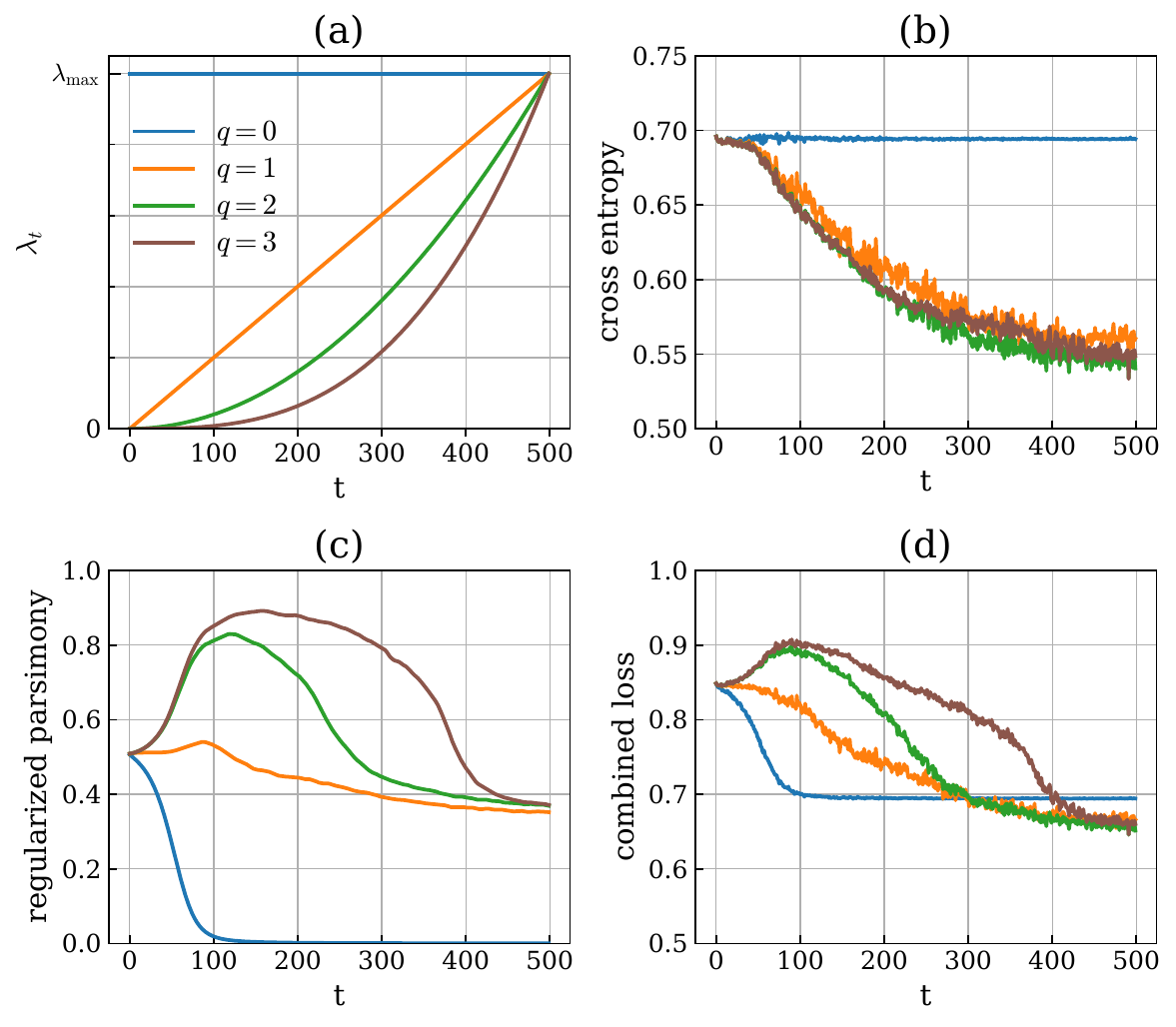}
    \caption{The annealing schedule for $\lambda_t$ over $t$ epochs, showing the impact of different choices of $q=\{0,1,2,3\}$ in $\lambda_{t} = \left(\frac{t}{T} \right)^{q}\lambda_{\text{max}}$ on the loss function \eqref{eq:loss_hide_seek}. (a) The functional form of $\lambda_t$. 
    (b) The cross-entropy term $-\sum_{c=1}^C y_{c} \log \hat{y}_{c}$ vs t. (c) The regularized parsimony term $\frac{\lambda_t}{d}  \|\m\|_1$ vs t. (d) The combined loss $\ell(\V{\alpha}, \V{\beta}, t)$ vs t. The metrics in (b)-(d) are calculated on a hold-out validation set using the same $\lambda_{\text{max}}$ and data as in the Syn4 experiment in \cref{subsec:synthetic_data}. See Appendix~\ref{app:annealing_analysis} for other choices of $\lambda_{\text{max}}$ and more detail.}
    \label{fig:val_losses_specific_lambda}
\end{figure}

Optimizing the parameters of a joint network to both minimize cross-entropy and increase parsimony presents a challenge. Specifically, placing excessive emphasis on parsimony early in training can cause the optimization to become trapped in poor local minima. To address this, we introduce an annealing schedule, in line with \citet{bowman2016generating} and \citet{tagaris2020hide}. In our method, we progressively increase the weight on our parsimony term by quadratically growing the regularization parameter towards a set maximum value. Let $\lambda_{\text{max}}$ be a fixed hyperparameter, and $t \in \{1, \dots, T\}$ index the training over $T$ epochs. 
We replace $
\lambda$ in \eqref{eq:loss_hide_seek_no_t} with $\lambda_t$ where:

\begin{equation}
\lambda_{t} = \left(\frac{t}{T} \right)^{2}\lambda_{\text{max}}
\label{eq:annealing}
\end{equation}

and the final loss becomes:

\begin{equation}
\ell(\V{\alpha}, \V{\beta}, t) 
= \mathbb{E}_{
    \substack{
        (\x, \y) \sim p(\X,\Y) \\ 
        \hspace{-2mm}\xm \sim \prod_{i\in D}{p}(X_i) }
    }
\left[
 - \sum_{c=1}^C y_{c} \log \hat{y}_{c} 
+ \frac{\lambda_t}{d}  \|\m\|_1
\right]
\label{eq:loss_hide_seek}
\end{equation}

As shown in \Cref{fig:val_losses_specific_lambda}, 
this schedule allows the model to prioritize prediction accuracy early in training and focus on mask parsimony in later stages, leading to a lower final loss. Experimentally, this led to strong performance in the feature importance metrics. See Appendix~\ref{app:annealing_analysis} for more detail.

\section{Experiments}
\label{sect:experiments}

We present five analyses. The \emph{Synthetic data} experiment evaluates Hide\&Seek's ability to recover ground-truth feature importance against existing benchmarks. \emph{Switch analysis} compares model performance in identifying switch-features, that is, features whose induced partition, rather than their exact feature value, affects the response. 
\emph{Credit default data} measures Hide\&Seek's ability to recover ground truth feature importance from highly correlated features, as well as comparing the predictive power of the model in semi-synthetic and real-world settings. The fourth experiment presents a Hide\&Seek explanation for \emph{MNIST} images. The final experiment, \emph{Breast cancer subtype classification} applies Hide\&Seek to genetic microarray data.

\subsection{Synthetic Data}
\label{subsec:synthetic_data}

\begin{table*}
\centering
\setlength{\tabcolsep}{4pt}
\caption{Performance of eight algorithms across six datasets. Each TPR and FDR metric is the median of 20 experiments. See Appendix \ref{app:metric_calculations} for boxplots.}
\begin{tabular}{l|cc|cc|cc|cc|cc|cc|cc|cc}
\hline
\text{Model} 
& \multicolumn{2}{c|}{Hide\&Seek}
& \multicolumn{2}{c|}{INVASE}
& \multicolumn{2}{c|}{REAL-x}
& \multicolumn{2}{c|}{SHAP}
& \multicolumn{2}{c|}{LIME}
& \multicolumn{2}{c|}{L2X}
& \multicolumn{2}{c|}{RForest}
& \multicolumn{2}{c}{LASSO} \\
& TPR & FDR & TPR & FDR & TPR & FDR & TPR & FDR & TPR & FDR & TPR & FDR & TPR & FDR & TPR & FDR \\
\hline
Syn1 & \textbf{100} & \textbf{0} & \textbf{100} & \textbf{0} & \textbf{100} & 9 & 64 & 36 & 19 & 81 & 30 & 70 & \textbf{100} & \textbf{0} & 0 & 100 \\
Syn2 & \textbf{100} & \textbf{0} & \textbf{100} & \textbf{0} & \textbf{100} & 4 & 95 & 5 & \textbf{100} & \textbf{0} & 75 & 25 & \textbf{100} & \textbf{0} & 50 & 50 \\
Syn3 & 99 & \textbf{0} & 91 & \textbf{0} & 82 & 1 & 92 & 8 & 96 & 4 & 43 & 57 & \textbf{100} & \textbf{0} & 75 & 25 \\
Syn4 & \textbf{99} & \textbf{4} & 90 & \textbf{4} & 98 & 15 & 72 & 38 & 54 & 51 & 44 & 65 & 67 & 40 & 58 & 55 \\
Syn5 & \textbf{97} & 3 & 83 & \textbf{1} & 88 & 9 & 74 & 38 & 50 & 54 & 53 & 57 & 67 & 40 & 56 & 50 \\
Syn6 & \textbf{98} & \textbf{4} & 90 & 9 & 91 & \textbf{4} & 72 & 28 & 51 & 49 & 53 & 47 & 60 & 40 & 60 & 40 \\
\hline
\end{tabular}
\label{tab:tpr_fdr_transposed}
\end{table*}

Our first experiment uses the same synthetic datasets as in \citet{yoon2018invase}, \citet{chen2018learning} and \citet{arik2021tabnet}. The output $Y$ is sampled from a Bernoulli distribution, where $P(Y=1|U) = \frac{1}{1+e^{U}}$. $U$ is a function of 11 Gaussian iid variables ${X_1, ...,X_{11}}$, where $X_j \sim \mathcal{N}(0,1)$. There are six settings.

\begin{itemize}[leftmargin=*]
    \item \textbf{Syn1}: $U = X_1 X_2$
    \item \textbf{Syn2}: $U = \sum_{i=3}^{6} X_i^2 - 4$
    \item \textbf{Syn3}: $U = -10\sin(0.2X_7)+|X_8|+X_9+e^{-X_{10}}-2.4$
    \item \textbf{Syn4}: $U$: if $X_{11}<0$, follow Syn1, else Syn2
    \item \textbf{Syn5}: $U$: if $X_{11}<0$, follow Syn1, else Syn3
    \item \textbf{Syn6}: $U$: if $X_{11}<0$, follow Syn2, else Syn3
\end{itemize}

The goal is to identify the specific features used in generating $Y$. Syn1--3 represent global feature importance problems. Syn4--6 use a switch feature, $X_{11}$, to create instance-wise feature importance settings. We evaluate each algorithm's success using \emph{True Positive Rate} (TPR) and \emph{False Discovery Rate} (FDR), as in \cite{yoon2018invase}. 

We compare eight algorithms. Six are instance-wise algorithms: Hide\&Seek, INVASE \citep{yoon2018invase}, REAL-x \citep{jethani2021have}, SHAP \citep{lundberg2017unified}, LIME \citep{ribeiro2016why} and L2X \citep{chen2018learning}. Two provide global feature importance: LASSO \citep{tibshirani1996lasso} and Random Forest \citep{breiman2001random}. We train on 10,000 samples and test on 10,000 samples.

Hide\&Seek outperforms the other models on instance-wise feature selection (IWFS) metrics, as shown in \Cref{tab:tpr_fdr_transposed}. 
Its end-to-end differentiability makes it easy to train, with model run times shown in \Cref{tab:s_p_run_times}. INVASE and REAL-x have good performance, but significantly longer training times due to more complex architectures. INVASE and L2X use different activation functions for different datasets (ReLU for Syn1-2 and SELU for Syn4--6) \citep{yoon2018invase, chen2018learning}. In Hide\&Seek, the activation function (ReLU) is constant across all datasets.  

Hide\&Seek, INVASE and REAL-x have a natural mechanism for determining whether a feature is important, namely that its mask is greater than 0.5. This allows the number of important features to be automatically chosen for each instance. In all other methods, the number of important features, $k$, needs to be specified. In the experiments, $k$ is chosen based on the number of ground truth important features sought for each dataset. Specifically, $k = 2$ for Syn1, $k = 4$ for Syn2--3 and $k = 5$ for Syn4--6. This can overestimate the FDR for Syn4 and Syn5, which have 5 important features when $X_{11}\geq0$ but only 3 important features when $X_{11}<0$. To account for this, SHAP, LIME and L2X results for $k = 3$ and $k = 4$ are shown in Appendix \ref{app:k=3_4}.

The distributions for the masks for each of the synthetic datasets are provided in Appendix~\ref{app:mask_distributions}. Each model's predictive performance is shown in Appendix~\ref{app:auroc_syn1-6}. Hide\&Seek shows a higher predictive performance than other selector--predictor models despite having $17\times$ (REAL-x, L2X) to $30\times$ (INVASE) fewer parameters. 

\begin{table}[b]
\centering
\caption{Typical run times of selector--predictor models for the synthetic data, with IWFS metrics reported in \cref{tab:tpr_fdr_transposed}. Times include training (10{,}000 samples), prediction, and feature importance attribution (10{,}000 samples). See Appendix~\ref{app:hardware} for hardware and Appendix~\ref{app:time} for more detail.}
\begin{tabular}{l c}
\hline
\textbf{Method} & \textbf{Run time (hh:mm:ss)} \\
\hline
L2X       & 00:00:03 \\
Hide\&Seek & 00:00:05 \\
REAL-x    & 00:01:16 \\
INVASE    & 01:18:52 \\
\hline
\end{tabular}
\label{tab:s_p_run_times}
\end{table}

\subsection{Switch Analysis}
\label{subsec:switch_analysis}

Corollary~\ref{corollary:corol1} shows that, under the ill-posed loss proxy, a switch feature can be ablated, and hence misidentified as unimportant, at an achievable rate at least as large as the probability of its largest partition cell. In Syn4--6, the switch feature is $X_{11}$. Since $X_{11}\sim \mathcal{N}(0,1)$, the partition cells $X_{11}<0$ and $X_{11}\geq 0$ have equal probability, and the corollary therefore gives an achievable misidentification rate of $50\%$.
Intuitively, a jointly trained selector--predictor network that ablates unselected features to zero can use the artificial value $X_{11}=0$ as a code for one of the two partition cells. The end result is that $X_{11}$ is not identified as important 50\% of the time, without degrading prediction accuracy.

To test for this information leak, we analyze each model's ability to correctly identify $X_{11}$ as an important feature. As in Section~\ref{subsec:synthetic_data}, we run each of Syn4--6 20 times. We then test how often each model correctly identifies the switch-feature, $X_{11}$. Across Syn4--6, REAL-x has perfect switch accuracy, due to its disjoint training of the predictor network. The median switch accuracy of Hide\&Seek is also $100\%$ for Syn4--6, significantly higher than INVASE ($\approx51\%$) and L2X ($\approx58\%$). These results in Figure~\ref{fig:switch_acc} empirically confirm the benefit of our feature replacement strategy.

\begin{figure}
    \centering
    \includegraphics[width=1\linewidth]{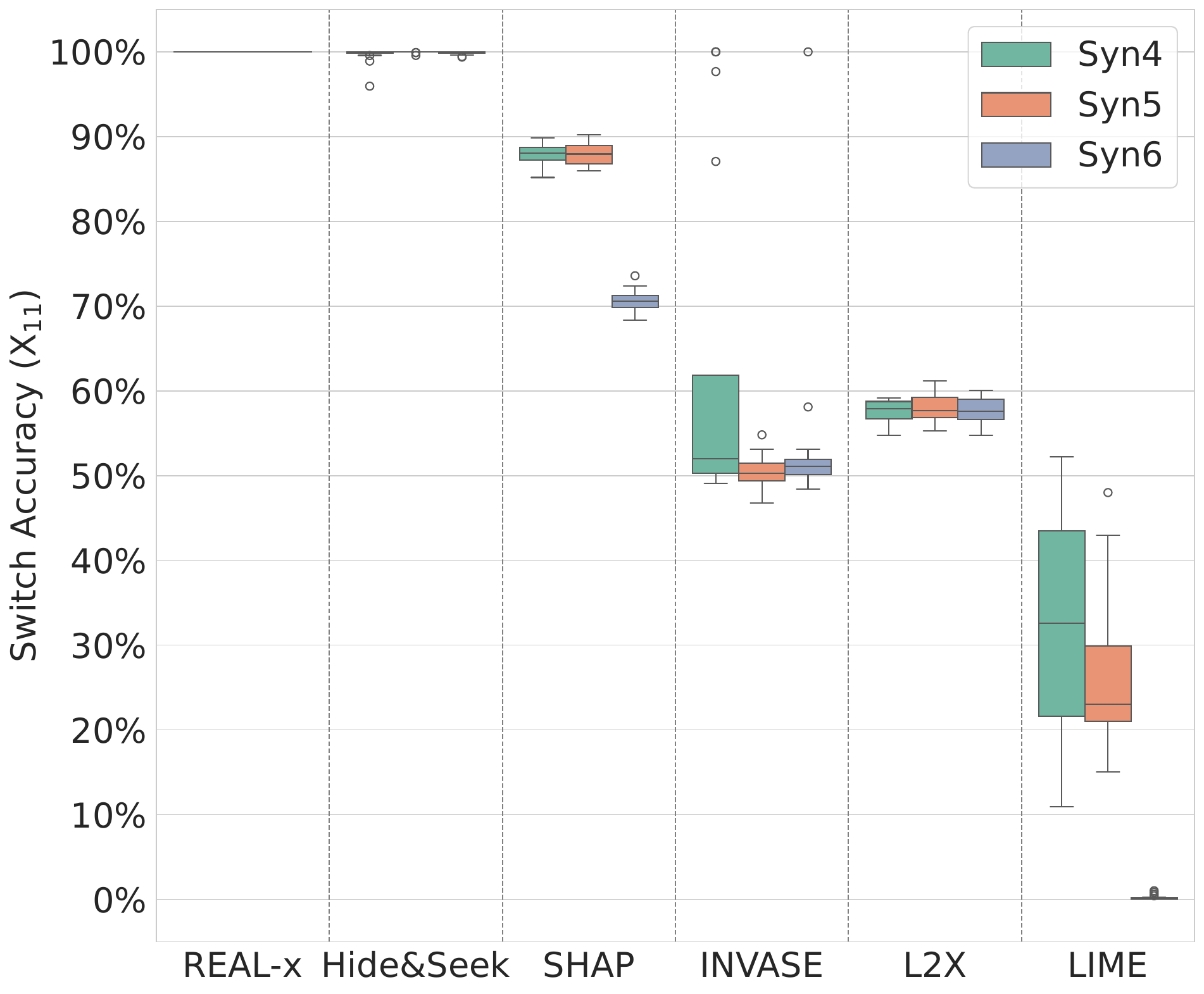}
    \caption{Switch accuracy. The percentage of instances where the switch-feature, $X_{11}$, was correctly identified as important. Each boxplot represents the distribution across 20 runs.}
    \label{fig:switch_acc}
\end{figure}

To assess the impact of this information leakage, we analyzed the INVASE results from the experiment in \Cref{subsec:synthetic_data}. We manually corrected failures to identify the switch feature, $X_{11}$, in Syn4--6 and recomputed the results. If $X_{11}$ were correctly identified, INVASE's TPR would rise to 99.9, 99.8, and 99.9 for Syn4, Syn5, and Syn6, respectively. Note that FDR would be unchanged as $X_{11}$ is always important.
In other words, INVASE's TPR gap in Table~\ref{tab:tpr_fdr_transposed} could be closed entirely by correctly identifying the switch feature.

\subsection{Correlated Data}
\label{subsec:correlated_synthetic_data}

This experiment investigates the robustness of Hide\&Seek under multicollinearity. We generate features and the target as in \cref{subsec:synthetic_data}, except that instead of drawing each feature independently from $X_j \sim \mathcal{N}(0, 1)$, we draw from a multivariate normal distribution $\mathbf{X} \sim \mathcal{N}(\mathbf{0}, \Sigma)$. The covariance matrix $\Sigma$ is defined using a constant pairwise correlation parameter $\rho$, such that $\Sigma_{i,j} = \rho$ for $i \neq j$ and $\Sigma_{i,j} = 1$ for $i=j$. Thus, $\rho$ controls feature dependence.

We ran each model on Syn1-6, under this correlated setting, for each of $\rho \in \{0, 0.1, 0.2, \dots ,0.8,0.9\}$. The results in \Cref{fig:correlated_synthetic} show that Hide\&Seek consistently outperforms the other models at instance-wise feature selection under multicollinearity.

Hide\&Seek also maintained a switch accuracy greater than 99.4\% (averaged across Syn4--6) for all choices of $\rho$. This indicates the absence of leakage even when the switch feature has high pairwise correlations. Switch feature accuracies for each $\rho$ and model are shown in Appendix~\ref{app:correlated_synthetic_data}.

\begin{figure}
    \centering
    \includegraphics[width=1\linewidth]{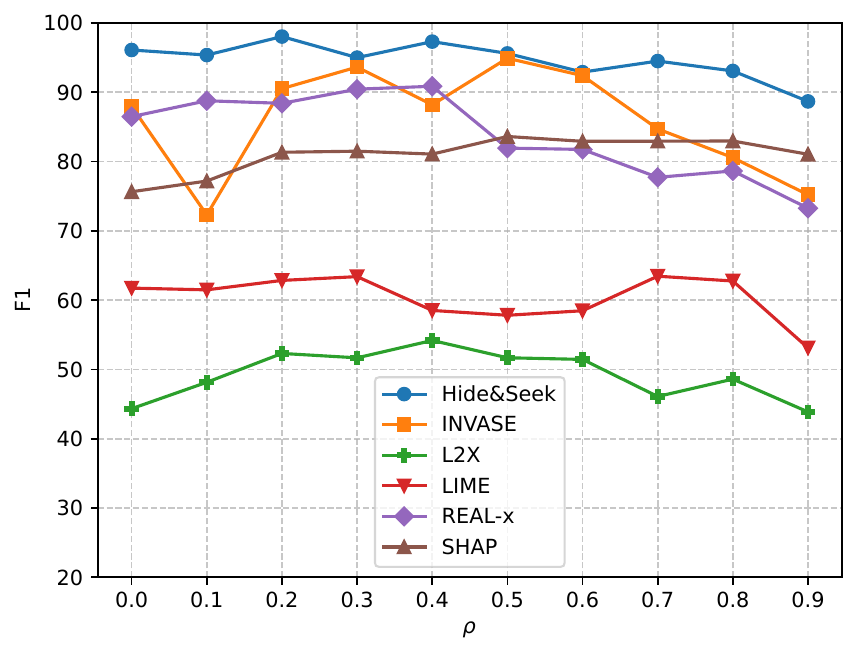}
    \caption{Model performance under the multicollinear setting. Each point represents the average IWFS F1 score across Syn1-6 for a given model and a given value of $\rho$. For these synthetic datasets, $\mathbf{X} \sim \mathcal{N}(\mathbf{0}, \Sigma)$, where $\Sigma_{i,j} = \rho$ for $i \neq j$ and $\Sigma_{i,j} = 1$ for $i=j$.}
    \label{fig:correlated_synthetic}
\end{figure}

\subsection{Credit Default Data}
\label{subsec:semi_synthetic_data}

In this experiment, we evaluate the strongest selector--predictor models against correlated real-world features in a semi-synthetic IWFS setting. We also report on their predictive power. 

The \emph{Default of Credit Card Clients} dataset contains 30{,}000 instances and 23 features describing demographic and financial attributes of credit card holders \citep{yeh2009comparisons}. We generate a binary label synthetically to establish a ground-truth feature importance against which to evaluate our models, using the same functional forms for Syn4-Syn6 as in \cref{subsec:synthetic_data}. For $\{X_1,\dots,X_{10}\}$ we use highly correlated features including customer bill and payment amounts across consecutive months and set \emph{AGE} as the switch feature, $X_{11}$. Appendix~\ref{app:credit_default} presents the full list of features and the correlation matrix. \Cref{tab:synthetic_comparison_full} shows the instance-wise feature selection results.

While the primary goal of these models is to identify important instance-wise features, each model's predictive power may be of interest. A significantly low prediction rate could indicate an upper bound on a model's feature selection ability. We evaluated the predictive performance on the Credit Default data in each of the semi-synthetic settings and in predicting the true $Y$ values, which indicate whether or not a customer defaulted on their payment. The results are shown in \Cref{tab:auroc_credit}.

\begin{table}[t]
\centering
\small
\setlength{\tabcolsep}{4pt}
\caption{IWFS performance on highly correlated credit default data, where the target is generated as in \Cref{subsec:synthetic_data}. Each metric is the median of 20 experiments.}
\begin{tabular}{l|ccc|ccc|ccc}
\hline
\text{Model}
& \multicolumn{3}{c|}{\text{Hide\&Seek}}
& \multicolumn{3}{c|}{\text{INVASE}}
& \multicolumn{3}{c}{\text{REAL-X}} \\
& TPR & FDR & F1 & TPR & FDR & F1 & TPR & FDR & F1 \\
\hline
Syn1 & \textbf{100} & 1 & 99 & \textbf{100} & \textbf{0} & \textbf{100} & \textbf{100} & 22 & 86 \\
Syn2 & 67 & 26 & 67 & \textbf{100} & \textbf{0} & \textbf{100} & 49 & 13 & 59 \\
Syn3 & \textbf{100} & \textbf{0} & \textbf{100} & 98 & \textbf{0} & 99 & 63 & 11 & 69 \\
Syn4 & \textbf{87} & 34 & \textbf{74} & 58 & \textbf{31} & 57 & 63 & 35 & 62 \\
Syn5 & \textbf{96} & \textbf{28} & \textbf{81} & 80 & \textbf{28} & 74 & 79 & 41 & 66 \\
Syn6 & \textbf{87} & 39 & \textbf{71} & 73 & \textbf{31} & 62 & 48 & 38 & 52 \\
\hline
\end{tabular}
\label{tab:synthetic_comparison_full}
\end{table}

\begin{table}[t]
\centering
\small
\caption{Prediction performance for highly correlated credit default data with synthetically generated labels (Syn1-6) and the credit default binary $Y$ labels (CDY) that indicate whether or not a customer defaulted on their payment. The metric is mean AUROC (± standard error) across 20 runs.}
\begin{tabular}{l|c|c|c}
\hline
 & \multicolumn{3}{c}{AUROC} \\
\cline{1-4}
Model & Hide\&Seek & INVASE & REAL-X \\
\hline
Syn1 & \textbf{0.664 $\pm$ 0.002} & 0.601 $\pm$ 0.002 & 0.641 $\pm$ 0.003 \\
Syn2 & \textbf{0.885 $\pm$ 0.002} & 0.863 $\pm$ 0.002 & 0.878 $\pm$ 0.003 \\
Syn3 & \textbf{0.767 $\pm$ 0.002} & 0.752 $\pm$ 0.003 & 0.729 $\pm$ 0.005 \\
Syn4 & \textbf{0.828 $\pm$ 0.001} & 0.815 $\pm$ 0.002 & 0.809 $\pm$ 0.002 \\
Syn5 & \textbf{0.700 $\pm$ 0.002} & 0.675 $\pm$ 0.002 & 0.663 $\pm$ 0.003 \\
Syn6 & \textbf{0.867 $\pm$ 0.002} & 0.858 $\pm$ 0.001 & 0.845 $\pm$ 0.002 \\
CDY & 0.770 $\pm$ 0.002 & \textbf{0.771 $\pm$ 0.001} & 0.756 $\pm$ 0.002 \\
\hline
\end{tabular}
\label{tab:auroc_credit}
\end{table}

\subsection{MNIST}
\label{subsec:mnist}

The MNIST dataset contains handwritten digits represented as grayscale images \citep{lecun2002gradient}. We train on 11,172 images and test on 1,397 images. As in \cite{chen2018learning}, we select only the 3s and 8s, with the intention that the minor differences between the two digits could imply feature importance where the ground truth is unknown.

We compare five algorithms: Hide\&Seek, INVASE, REAL-x, SHAP and LIME. To assign feature importance, SHAP and LIME first require a baseline model to interpret. For SHAP, we train XGBoost, a tree-based gradient boosting algorithm \citep{chen2016xgboost}. For LIME we train an independent neural network. Hide\&Seek, INVASE and REAL-x perform predictions by design and we use their predictor networks without adjusting any architecture. In all five cases, we achieve a classification accuracy of 99.1\% or greater.

Each image contains $28 \times 28 = 784$ pixels. To identify the most important patches, we assign an importance score to each pixel using the given model. We then apply a $3 \times 3$ sliding window over the image, assigning each $3 \times 3$ patch an aggregated importance score. Figure~\ref{fig:mnist_patches} shows the four most important patches in each image, identified by each model. 

\begin{figure}[t]
    \centering
    \begin{tabular}{c}
        {\small Hide\&Seek} \\
        \includegraphics[width=1\linewidth]{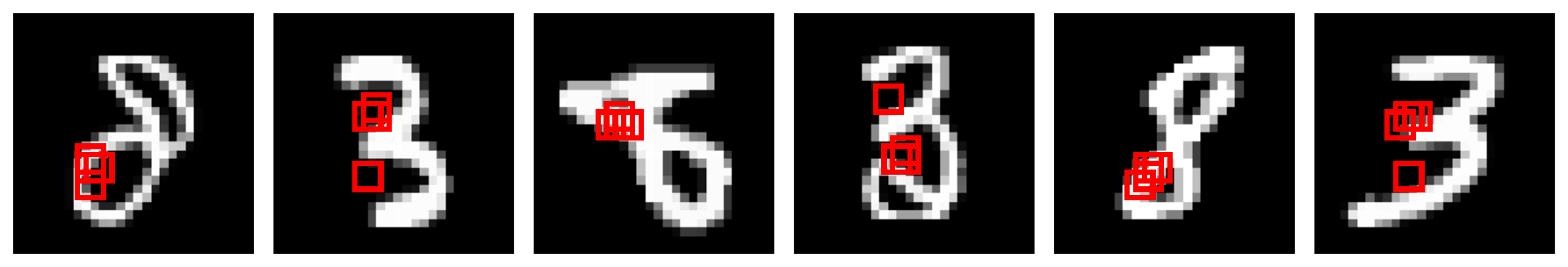} \\
        {\small INVASE} \\
        \includegraphics[width=1\linewidth]{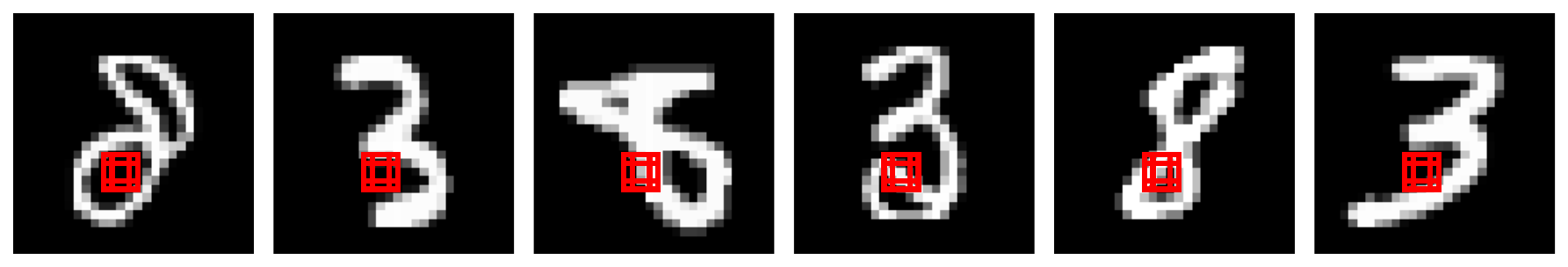} \\
        {\small REAL-x} \\
        \includegraphics[width=1\linewidth]{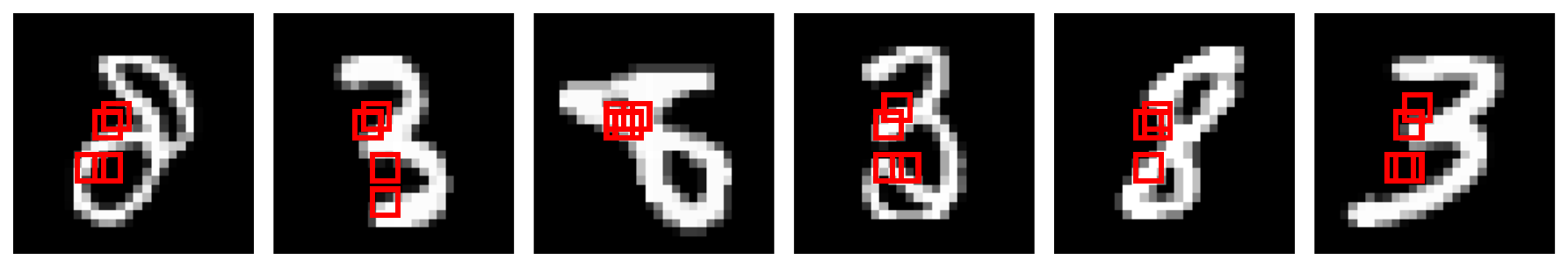} \\
        {\small SHAP} \\
        \includegraphics[width=1\linewidth]{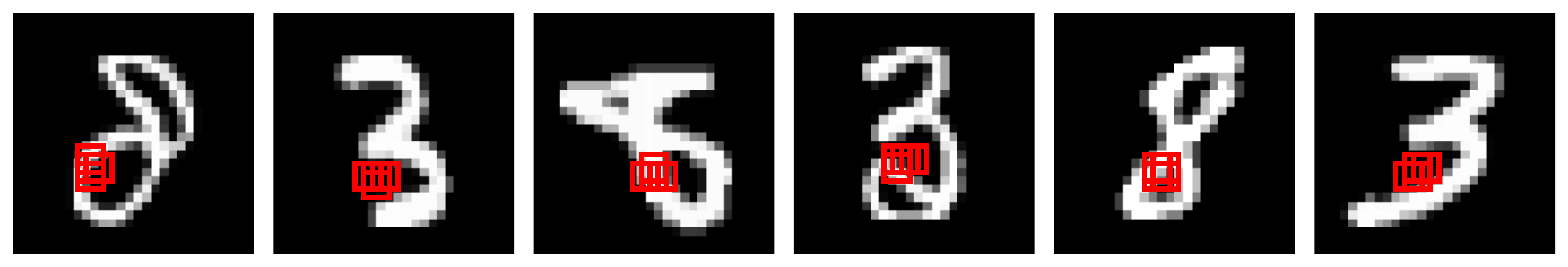} \\
        {\small LIME} \\
        \includegraphics[width=1\linewidth]{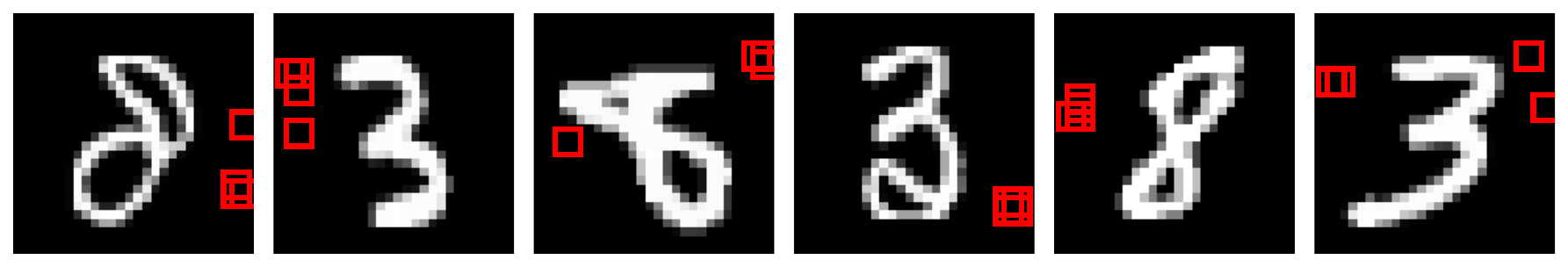} \\
    \end{tabular}
    \caption{Important patches in MNIST data, as identified by each explanation method. Each 3x3 red square is one of the four most important patches in the image.}
    \label{fig:mnist_patches}
\end{figure}

These instance-wise explanation models are expected to adapt to each drawing and highlight important pixels in each context. Only Hide\&Seek and REAL-x consistently identify the ‘left arcs’ of each digit, which indicate an 8 when present and a 3 when absent. By contrast, LIME fails to identify important patches. INVASE and SHAP display less varied explainability. 

\begin{table*}[t]
\centering
\footnotesize
\caption{Top 10 important genes across different explanation methods. Feature importance represents the mean mask size (Hide\&Seek, INVASE, REAL-x), mean importance scores (SHAP, LIME), or mean selection frequency (L2X). Standard errors are provided in the appendix.}
\label{tab:gene_importance_summary}
\begin{tabular*}{\textwidth}{@{\extracolsep{\fill}} lclclclclclc @{}}
\toprule
\multicolumn{2}{c}{\textbf{Hide\&Seek}} & \multicolumn{2}{c}{\textbf{INVASE}} & \multicolumn{2}{c}{\textbf{REAL-x}} & \multicolumn{2}{c}{\textbf{SHAP}} & \multicolumn{2}{c}{\textbf{L2X}} & \multicolumn{2}{c}{\textbf{LIME}} \\
\cmidrule(lr){1-2} \cmidrule(lr){3-4} \cmidrule(lr){5-6} \cmidrule(lr){7-8} \cmidrule(lr){9-10} \cmidrule(lr){11-12}
Gene & Imp. & Gene & Imp. & Gene & Imp. & Gene & Imp. & Gene & Imp. & Gene & Imp. \\
\midrule
ESR1    & 0.999 & CCNB2     & 0.723 & ESR1    & 0.969 & ESR1    & 0.695 & PENK    & 0.640 & TUBB     & 0.044 \\
CCNB2   & 0.951 & ESR1      & 0.711 & CCNB2   & 0.897 & CCNB2   & 0.402 & BIRC3   & 0.600 & HACE1    & 0.044 \\
STATH   & 0.827 & ZNF775    & 0.680 & NUP210  & 0.833 & C6orf15 & 0.212 & TMEM52  & 0.590 & C6orf26  & 0.038 \\
C6orf26 & 0.804 & KLF3      & 0.619 & C6orf15 & 0.758 & ZNF385  & 0.126 & HPS4    & 0.590 & PENK     & 0.033 \\
TUBB    & 0.784 & C6orf15   & 0.610 & SLC25A3 & 0.754 & NUP210  & 0.121 & OTUD3   & 0.590 & ESR1     & 0.033 \\
C7      & 0.773 & TMSB10    & 0.597 & SPOCD1  & 0.606 & TMSB10  & 0.077 & CAPZB   & 0.590 & C7       & 0.032 \\
NCAPH2  & 0.734 & NCAPH2    & 0.586 & C6orf26 & 0.593 & C6orf26 & 0.071 & C6orf26 & 0.580 & CCNB2    & 0.031 \\
UPK3B   & 0.727 & C20orf111 & 0.570 & TUBB    & 0.506 & C7      & 0.069 & STXBP1  & 0.580 & KIAA1949 & 0.029 \\
PARP1   & 0.710 & NUP210    & 0.562 & OR52E8  & 0.502 & HACE1   & 0.064 & ACLY    & 0.580 & NCAPH2   & 0.029 \\
HACE1   & 0.680 & CAPZB     & 0.553 & HACE1   & 0.498 & GPX2    & 0.062 & COL25A1 & 0.580 & OAS2     & 0.029 \\
\bottomrule
\end{tabular*}
\end{table*}

\subsection{Breast Cancer Subtype Classification}
\label{subsec:tcga}
In this experiment, we use Hide\&Seek to analyze breast cancer (BRCA) microarray data from The Cancer Genome Atlas (TCGA) where tumors are categorized into four different molecular subtypes \citep{berger2018pan_cancer}. We select the same subset of 100 genes (out of $17{,}814$) as in \citet{covert2021explaining} to reduce overfitting and enable comparison between our experiments. The dataset is relatively small, containing 572 instances. Using a test set of size 100, we rank the genes by mean mask size, thereby aggregating instance-wise importance into a global analysis. As shown in \Cref{tab:gene_importance_summary}, the top two genes \emph{ESR1} and \emph{CCNB2} exhibit significantly higher importance than the remainder. Both \emph{ESR1} \citep{robinson2013activating} and \emph{CCNB2} \citep{shubbar2013cyclinb2} have established associations with BRCA and were also identified as important by \citet{covert2021explaining}. We provide a comparison with other models, for which we updated the code of INVASE, REAL-x and L2X to handle classification with more than two classes. This experiment highlights the potential of Hide\&Seek to identify biologically relevant gene associations, including candidates beyond those previously reported in the literature. See Appendix~\ref{app:tcga} for further details.

\section{Conclusion}
\label{sect:conclusion}
We introduced Hide\&Seek, an end-to-end differentiable method for instance-wise feature selection. 
The method jointly trains a selector and predictor while avoiding a key failure mode: information leakage. 
Instead of ablating unselected features with a fixed value, Hide\&Seek replaces them with random draws from the feature distribution. This prevents the predictor from using the ablation pattern itself as a source of information. 
Our use of continuous masks enables efficient joint optimization: rather than sampling a discrete subset of features, we replace a proportion of each feature. Our parsimony-weight annealing schedule stabilizes training by prioritizing predictive accuracy early and sparsity in later stages.

We also provided a theoretical analysis of information leakage in jointly trained ablation models. In particular, we showed that the usual training objective for ablation-based models can be an ill-posed proxy for the intended expected KL objective. This causes genuinely important features to be omitted without degrading predictive performance. We show that for switch features, which affect the response through their induced partition, this yields a lower bound on the achievable misidentification rate. Our experiments support this analysis: leakage-prone methods often fail to identify the switch feature at rates close to the theoretical bound, whereas Hide\&Seek maintains high switch accuracy.

Under the idealized Hide\&Seek construction: a binary mask and drawing replacement values conditioned on the selected features, information leakage is theoretically impossible. Our empirical results further show that Hide\&Seek remains robust to information leakage when using a continuous relaxation of the mask and drawing replacement values from the marginal feature distributions.

Across synthetic benchmarks, correlated semi-synthetic credit-default experiments, MNIST explanations, and genetic microarray data, Hide\&Seek achieves strong instance-wise feature-selection performance while remaining fast to train. These results suggest that replacing fixed ablations with distributional feature replacement preserves the benefits of joint selector--predictor training without introducing an artificial information channel.

\section*{Acknowledgements}
This project was made possible by CSIRO's Next Generation Graduates program and the Paul Ramsay Foundation through the Thrive: Finishing School Well program. We thank the reviewers for their thoughtful feedback.

\section*{Author Contributions}
H.M.A. conceived the main idea, developed the theoretical findings and proofs, and designed the information-leakage-resistant modelling approach. 
T.E. implemented and developed the model, including the annealing framework, designed and conducted the experiments and wrote the first draft.
S.C. supervised the research, provided conceptual guidance and contributed to the interpretation of the results. 
All authors contributed to discussions and reviewed the final manuscript.

\section*{Impact Statement}

This paper presents work whose goal is to advance the field of Explainable AI (XAI). Improved interpretability of black-box models can support transparency and informed decision-making. However, incorrect or misleading explanations can introduce risk, which underscores the need for careful validation.

We also prove that a seemingly natural loss function used in prior XAI work is ill-posed, in the sense that it can lead to unintended information leakage.
This highlights the broader challenge of designing neural network training objectives that do not encourage unintended behavior.


\bibliography{refs}

\begin{thebibliography}{44}
\providecommand{\natexlab}[1]{#1}
\providecommand{\url}[1]{\texttt{#1}}
\expandafter\ifx\csname urlstyle\endcsname\relax
  \providecommand{\doi}[1]{doi: #1}\else
  \providecommand{\doi}{doi: \begingroup \urlstyle{rm}\Url}\fi

\bibitem[Arik \& Pfister(2021)Arik and Pfister]{arik2021tabnet}
Arik, S.~{\"O}. and Pfister, T.
\newblock Tabnet: Attentive interpretable tabular learning.
\newblock In \emph{Proceedings of the AAAI Conference on Artificial Intelligence}, volume~35, 2021.

\bibitem[Berger et~al.(2018)]{berger2018pan_cancer}
Berger, A.~C. et~al.
\newblock A comprehensive pan-cancer molecular study of gynecologic and breast cancers.
\newblock \emph{Cancer Cell}, 33\penalty0 (4):\penalty0 690--705, 2018.

\bibitem[Bowman et~al.(2016)Bowman, Vilnis, Vinyals, Dai, Jozefowicz, and Bengio]{bowman2016generating}
Bowman, S.~R., Vilnis, L., Vinyals, O., Dai, A.~M., Jozefowicz, R., and Bengio, S.
\newblock Generating sentences from a continuous space.
\newblock In \emph{Proceedings of the 20th SIGNLL Conference on Computational Natural Language Learning (CoNLL)}, pp.\  10--21. Association for Computational Linguistics, 2016.

\bibitem[Breiman(2001)]{breiman2001random}
Breiman, L.
\newblock Random forests.
\newblock \emph{Machine Learning}, 45\penalty0 (1):\penalty0 5--32, 2001.

\bibitem[Cand{\`e}s et~al.(2018)Cand{\`e}s, Fan, Janson, and Lv]{candes2018panning}
Cand{\`e}s, E., Fan, Y., Janson, L., and Lv, J.
\newblock Panning for gold: `model-{X}' knockoffs for high dimensional controlled variable selection.
\newblock \emph{Journal of the Royal Statistical Society Series B: Statistical Methodology}, 80\penalty0 (3):\penalty0 551--577, 2018.

\bibitem[Chen et~al.(2018)Chen, Song, Wainwright, and Jordan]{chen2018learning}
Chen, J., Song, L., Wainwright, M.~J., and Jordan, M.~I.
\newblock Learning to explain: An information-theoretic perspective on model interpretation.
\newblock In \emph{International Conference on Machine Learning}, pp.\  2703--2712. PMLR, 2018.

\bibitem[Chen \& Guestrin(2016)Chen and Guestrin]{chen2016xgboost}
Chen, T. and Guestrin, C.
\newblock Xgboost: A scalable tree boosting system.
\newblock In \emph{Proceedings of the 22nd ACM SIGKDD International Conference on Knowledge Discovery and Data Mining}, pp.\  785--794. ACM, 2016.

\bibitem[Choi et~al.(2016)Choi, Bahadori, Schuetz, Stewart, and Sun]{choi2016retain}
Choi, E., Bahadori, M.~T., Schuetz, A., Stewart, W.~F., and Sun, J.
\newblock Retain: An interpretable predictive model for healthcare using reverse time attention mechanism.
\newblock In \emph{Advances in Neural Information Processing Systems}, volume~29, 2016.

\bibitem[Covert et~al.(2021)Covert, Lundberg, and Lee]{covert2021explaining}
Covert, I., Lundberg, S., and Lee, S.-I.
\newblock Explaining by removing: A unified framework for model explanation.
\newblock \emph{Journal of Machine Learning Research}, 22\penalty0 (209):\penalty0 1--90, 2021.

\bibitem[Crabb{\'e} \& Van Der~Schaar(2021)Crabb{\'e} and Van Der~Schaar]{crabbe2021explaining}
Crabb{\'e}, J. and Van Der~Schaar, M.
\newblock Explaining time series predictions with dynamic masks.
\newblock In \emph{International Conference on Machine Learning}, pp.\  2555--2565. PMLR, 2021.

\bibitem[Dabkowski \& Gal(2017)Dabkowski and Gal]{dabkowski2017real}
Dabkowski, P. and Gal, Y.
\newblock Real time image saliency for black box classifiers.
\newblock \emph{Advances in neural information processing systems}, 30, 2017.

\bibitem[Datta et~al.(2016)Datta, Sen, and Zick]{datta2016algorithmic}
Datta, A., Sen, S., and Zick, Y.
\newblock Algorithmic transparency via quantitative input influence: Theory and experiments with learning systems.
\newblock In \emph{Proceedings of the IEEE Symposium on Security and Privacy}, pp.\  598--617. IEEE, 2016.

\bibitem[Guyon \& Elisseeff(2003)Guyon and Elisseeff]{guyon2003introduction}
Guyon, I. and Elisseeff, A.
\newblock An introduction to variable and feature selection.
\newblock \emph{Journal of Machine Learning Research}, 3:\penalty0 1157--1182, 2003.

\bibitem[Hooker et~al.(2019)Hooker, Erhan, Kindermans, and Kim]{hooker2019benchmark}
Hooker, S., Erhan, D., Kindermans, P.-J., and Kim, B.
\newblock A benchmark for interpretability methods in deep neural networks.
\newblock In \emph{Advances in Neural Information Processing Systems}, volume~32, pp.\  9734--9745, 2019.

\bibitem[Imrie et~al.(2022)Imrie, van~der Schaar, and Lane]{imrie2022composite}
Imrie, F., van~der Schaar, M., and Lane, N.~D.
\newblock Composite feature selection using deep ensembles.
\newblock In \emph{Advances in Neural Information Processing Systems}, volume~35, pp.\  36142--36160, 2022.

\bibitem[Jang et~al.(2017)Jang, Gu, and Poole]{jang2017gumbel}
Jang, E., Gu, S., and Poole, B.
\newblock Categorical reparameterization with {G}umbel-{S}oftmax.
\newblock In \emph{Proceedings of the International Conference on Learning Representations}, 2017.

\bibitem[Jethani et~al.(2021)Jethani, Sudarshan, Aphinyanaphongs, and Ranganath]{jethani2021have}
Jethani, N., Sudarshan, M., Aphinyanaphongs, Y., and Ranganath, R.
\newblock Have we learned to explain?: How interpretability methods can learn to encode predictions in their interpretations.
\newblock In \emph{International Conference on Artificial Intelligence and Statistics}, pp.\  1459--1467. PMLR, 2021.

\bibitem[Kogan et~al.(2020)]{Kogan2020StrokeSeverity}
Kogan, E. et~al.
\newblock Assessing stroke severity using electronic health record data: a machine learning approach.
\newblock \emph{BMC Medical Informatics and Decision Making}, 20\penalty0 (1):\penalty0 8, 2020.

\bibitem[Kohavi \& John(1997)Kohavi and John]{kohavi1997wrappers}
Kohavi, R. and John, G.~H.
\newblock Wrappers for feature subset selection.
\newblock \emph{Artificial Intelligence}, 97\penalty0 (1--2):\penalty0 273--324, 1997.

\bibitem[LeCun et~al.(2002)LeCun, Bottou, Bengio, and Haffner]{lecun2002gradient}
LeCun, Y., Bottou, L., Bengio, Y., and Haffner, P.
\newblock Gradient-based learning applied to document recognition.
\newblock \emph{Proceedings of the IEEE}, 86\penalty0 (11):\penalty0 2278--2324, 2002.

\bibitem[Levy et~al.(2021)Levy, Chasalow, and Riley]{levy2021algorithms_public_sector}
Levy, K., Chasalow, K.~E., and Riley, S.
\newblock Algorithms and decision-making in the public sector.
\newblock \emph{Annual Review of Law and Social Science}, 17\penalty0 (1):\penalty0 309--334, 2021.
\newblock \doi{10.1146/annurev-lawsocsci-041221-023808}.

\bibitem[Liu \& Setiono(1996)Liu and Setiono]{liu1996probabilistic}
Liu, H. and Setiono, R.
\newblock A probabilistic approach to feature selection — a filter solution.
\newblock In \emph{Proceedings of the International Conference on Machine Learning}, volume~96, pp.\  319--327, 1996.

\bibitem[Louppe et~al.(2013)Louppe, Wehenkel, Sutera, and Geurts]{louppe2013understanding}
Louppe, G., Wehenkel, L., Sutera, A., and Geurts, P.
\newblock Understanding variable importances in forests of randomized trees.
\newblock In \emph{Advances in Neural Information Processing Systems}, volume~26, 2013.

\bibitem[Lundberg \& Lee(2017)Lundberg and Lee]{lundberg2017unified}
Lundberg, S.~M. and Lee, S.-I.
\newblock A unified approach to interpreting model predictions.
\newblock In \emph{Advances in Neural Information Processing Systems}, volume~30, 2017.

\bibitem[Maddison et~al.(2016)Maddison, Mnih, and Teh]{maddison2016concrete}
Maddison, C.~J., Mnih, A., and Teh, Y.~W.
\newblock The concrete distribution: A continuous relaxation of discrete random variables.
\newblock \emph{arXiv preprint arXiv:1611.00712}, 2016.

\bibitem[Pace \& Barry(1997)Pace and Barry]{pace1997sparse}
Pace, R.~K. and Barry, R.
\newblock Sparse spatial autoregressions.
\newblock \emph{Statistics \& Probability Letters}, 33\penalty0 (3):\penalty0 291--297, 1997.

\bibitem[Peters \& Schaal(2008)Peters and Schaal]{peters2008natural}
Peters, J. and Schaal, S.
\newblock Natural actor-critic.
\newblock \emph{Neurocomputing}, 71\penalty0 (7--9):\penalty0 1180--1190, 2008.

\bibitem[Petsiuk et~al.(2018)Petsiuk, Das, and Saenko]{petsiuk2018rise}
Petsiuk, V., Das, A., and Saenko, K.
\newblock Rise: Randomized input sampling for explanation of black-box models.
\newblock \emph{arXiv preprint arXiv:1806.07421}, 2018.

\bibitem[Ribeiro et~al.(2016)Ribeiro, Singh, and Guestrin]{ribeiro2016why}
Ribeiro, M.~T., Singh, S., and Guestrin, C.
\newblock “why should i trust you?” explaining the predictions of any classifier.
\newblock In \emph{Proceedings of the 22nd ACM SIGKDD International Conference on Knowledge Discovery and Data Mining}, pp.\  1135--1144. ACM, 2016.

\bibitem[Robinson et~al.(2013)Robinson, Wu, Vats, Su, Lonigro, Cao, Kalyana-Sundaram, Wang, Ning, Hodges, et~al.]{robinson2013activating}
Robinson, D.~R., Wu, Y.-M., Vats, P., Su, F., Lonigro, R.~J., Cao, X., Kalyana-Sundaram, S., Wang, R., Ning, Y., Hodges, L., et~al.
\newblock Activating {ESR1} mutations in hormone-resistant metastatic breast cancer.
\newblock \emph{Nature Genetics}, 45\penalty0 (12):\penalty0 1446--1451, 2013.

\bibitem[Romano et~al.(2020)Romano, Sesia, and Cand{\`e}s]{romano2020deep}
Romano, Y., Sesia, M., and Cand{\`e}s, E.
\newblock Deep knockoffs.
\newblock \emph{Journal of the American Statistical Association}, 115\penalty0 (532):\penalty0 1861--1872, 2020.

\bibitem[Rudin \& Shaposhnik(2023)Rudin and Shaposhnik]{rudin2023globally}
Rudin, C. and Shaposhnik, Y.
\newblock Globally-consistent rule-based summary-explanations for machine learning models: Application to credit-risk evaluation.
\newblock \emph{Journal of Machine Learning Research}, 24\penalty0 (16):\penalty0 1--44, 2023.

\bibitem[Schwab \& Karlen(2019)Schwab and Karlen]{schwab2019cxplain}
Schwab, P. and Karlen, W.
\newblock Cxplain: Causal explanations for model interpretation under uncertainty.
\newblock \emph{Advances in neural information processing systems}, 32, 2019.

\bibitem[Shrikumar et~al.(2017)Shrikumar, Greenside, and Kundaje]{shrikumar2017learning}
Shrikumar, A., Greenside, P., and Kundaje, A.
\newblock Learning important features through propagating activation differences.
\newblock In \emph{International Conference on Machine Learning}, pp.\  3145--3153. PMLR, 2017.

\bibitem[Shubbar et~al.(2013)Shubbar, Helou, Kovacs, Einbeigi, Olsson, Malmstr{\"o}m, Skoog, and Bergh]{shubbar2013cyclinb2}
Shubbar, E., Helou, K., Kovacs, A., Einbeigi, Z., Olsson, H., Malmstr{\"o}m, P., Skoog, L., and Bergh, J.
\newblock Elevated cyclin b2 expression in invasive breast carcinoma is associated with unfavorable clinical outcome.
\newblock \emph{BMC Cancer}, 13\penalty0 (1):\penalty0 1, 2013.

\bibitem[Sundararajan et~al.(2017)Sundararajan, Taly, and Yan]{sundararajan2017axiomatic}
Sundararajan, M., Taly, A., and Yan, Q.
\newblock Axiomatic attribution for deep networks.
\newblock In \emph{International conference on machine learning}, pp.\  3319--3328. PMLR, 2017.

\bibitem[Tagaris \& Stafylopatis(2020)Tagaris and Stafylopatis]{tagaris2020hide}
Tagaris, T. and Stafylopatis, A.
\newblock Hide-and-seek: A template for explainable ai.
\newblock \emph{arXiv preprint arXiv:2005.00130}, 2020.

\bibitem[Tibshirani(1996)]{tibshirani1996lasso}
Tibshirani, R.
\newblock Regression shrinkage and selection via the lasso.
\newblock \emph{Journal of the Royal Statistical Society: Series B (Methodological)}, 58\penalty0 (1):\penalty0 267--288, 1996.

\bibitem[Tucker et~al.(2017)Tucker, Mnih, Maddison, Lawson, and Sohl-Dickstein]{tucker2017rebar}
Tucker, G., Mnih, A., Maddison, C.~J., Lawson, J., and Sohl-Dickstein, J.
\newblock Rebar: Low-variance, unbiased gradient estimates for discrete latent variable models.
\newblock In \emph{Advances in Neural Information Processing Systems}, volume~30, 2017.

\bibitem[Unger \& Kather(2024)Unger and Kather]{unger2024deep_cancer_genomics}
Unger, M. and Kather, J.~N.
\newblock Deep learning in cancer genomics and histopathology.
\newblock \emph{Genome Medicine}, 16\penalty0 (1):\penalty0 44, 2024.
\newblock \doi{10.1186/s13073-024-01315-6}.

\bibitem[Xu \& Yang(2025)Xu and Yang]{xu2025interpretability}
Xu, B. and Yang, G.
\newblock Interpretability research of deep learning: A literature survey.
\newblock \emph{Information Fusion}, 115:\penalty0 102721, 2025.

\bibitem[Yeh \& Lien(2009)Yeh and Lien]{yeh2009comparisons}
Yeh, I.-C. and Lien, C.-h.
\newblock The comparisons of data mining techniques for the predictive accuracy of probability of default of credit card clients.
\newblock \emph{Expert Systems with Applications}, 36\penalty0 (2):\penalty0 2473--2480, 2009.

\bibitem[Yoon et~al.(2018)Yoon, Jordon, and Van~der Schaar]{yoon2018invase}
Yoon, J., Jordon, J., and Van~der Schaar, M.
\newblock Invase: Instance-wise variable selection using neural networks.
\newblock In \emph{International conference on learning representations}, 2018.

\bibitem[Zeiler \& Fergus(2014)Zeiler and Fergus]{zeiler2014visualizing}
Zeiler, M.~D. and Fergus, R.
\newblock Visualizing and understanding convolutional networks.
\newblock In \emph{European conference on computer vision}, pp.\  818--833. Springer, 2014.

\end{thebibliography}
\bibliographystyle{icml2026}

\newpage
\appendix
\onecolumn

\section{On Feature Ablation and Information Leakage}
\label{appendix:theorem.proof}

Let the random vector $\X := (X_1, \ldots, X_d)$, defined on $\mathcal{X}$, represent the input features and let $Y$, defined on $\mathcal{Y}$, represent the response. Let $P^\star(\X)$ and 
$P^\star(Y \mid \X=\x)$ denote the empirical marginal distribution of $\X$ and conditional distribution of $Y$ induced by the observational data. 

The ultimate objective of instance-wise feature selection (IWFS) setting
is to find a \emph{selector} function $\mathcal{S}(\x)$ and 
 and a \emph{predictor} distribution $\Px(Y | \{X_k=x_k\}_{k \in \S(\x)})$
 such that the selector 
$\mathcal{S} : \mathcal{X} \to 2^{\{1, \ldots, d\}}$ 
maps each instance, $\x$, to a subset of $\{1, \ldots d\}$ indicating the indices of the important features and the 
pair $(\S, \Px)$ minimizes the loss:  
\begin{align*}
\lx(\S, \Px) =
    \mathbb{E}_{\x \sim P^\star(\X)} 
    \Big[
        \text{KL}\big( 
            P^\star (Y \mid \X = \x)
            \;\|\;
            \Px (Y \mid \{X_k=x_k\}_{k\in \S(\x)} )
        \big)
        +
        \lambda \| \S(\x) \|
    \Big].
\end{align*}

We assume that $\lambda>0$ is chosen sufficiently small so that the minimiser of the intended loss $\lx$ selects every important feature. 

As mentioned in the main text, modelling this loss is challenging because it requires conditioning on arbitrary subsets of random variables.
Ablation-based methods address this by 
converting  $\{X_k=x_k\}_{k\in \S(\x)}$ into a fixed-size ablated vector, $\Z$, via \emph{feature ablation}. 

That is, if $k \in \{1, \ldots, d\}$ and $k \not \in \S(\x)$, then the corresponding ablated element, $Z_k$, is replaced by a symbol, e.g., $\ast$, that is not in the original support, $\mathcal{X}_k$.\footnote{
If $X_k$ is a continuous random variable and has a Lebesgue density, then the ablated features can be replaced by any fixed constant $c$, even if $c \in \X_k$, since the probability mass associated with any particular point is 0.} In many XAI methods, $\ast=0$ \citep{zeiler2014visualizing,petsiuk2018rise, yoon2018invase,schwab2019cxplain}, though user-defined values are also common \citep{ribeiro2016why, dabkowski2017real}.

Ablating the features of an instance $\X = \x$ according to the selector $\S(\x)$ is
equivalent to constructing a random vector $\Z$ 
using a binary \emph{ablation mask}
$\M \in \{0, 1\}^d$:
\begin{align*}
    \Z = \x \circledast \M(\S(\x)) \text{ where }
     [\M(\S(\x))]_i =
    \begin{cases}
  1, & \text{if } i \in \S(\x), \\
  0,  & \text{otherwise}.
\end{cases}
\text{ and }
[\x \circledast \V{m}]_i =
\begin{cases}
x_i, & m_i = 1,\\
\ast, & m_i = 0.
\end{cases}
\end{align*} 

These methods then minimise the following proxy loss:
\begin{align*}
\lz(\S, \Pz) :=
    \mathbb{E}_{\x \sim P^\star(\X)} 
    \Big[
        \text{KL}\big( 
            P^\star (Y \mid \X = \x)
            \;\|\;
            \Pz (Y \mid \Z = \x \circledast \M(\S(\x))  )
        \big)
        +
        \lambda \| (\S(\x)) \|
    \Big].
\end{align*}
The following theorem shows that $\lz(\S, \Pz)$ can be an ill-posed proxy for $\lx(\S, \Px)$.
In particular, in a broad class of scenarios, the selector and predictor that minimise the intended loss $\lx(\S, \Px)$ are not minimisers of the proxy loss $\lz(\S, \Pz)$.
Equivalently, there exist achievable 
selector--predictor pairs with lower proxy loss than the intended selector--predictor pair. As the theorem shows, such pairs exploit the ablation mask in an unintended way, allowing information to leak between the selector and predictor.  

\begin{theorem}\label{thm:main_theorem}
Let a subspace, $\mathcal{A}_i \subseteq \mathcal{X}_i$, of a random variable $X_i \in \X$ be partitioned as follows: $\mathcal{A}_i := \bigsqcup_{k=1}^{N} A_{i,k}$ (where $N \geq 1$).  
Assume 
$Y$ depends on $X_i$ through its partition i.e., if the realization, 
$\x_{-i}$, of the other random variables, $\X_{-i} := \X \backslash \{X_i\}$,
are given, 
$Y$ and $X_i$ are dependent but they are independent if the $X_i$'s partition is given:  
\begin{align}
    Y \notindep X_i \mid \X_{-i}=\x_{-i},
    \qquad 
    Y \indep X_i \mid X_i \in A_{i,k}, \X_{-i}=\x_{-i},
    \qquad
    \forall \x_{-i} \in \mathcal{X}_{-i}, \forall A_{i,k} \subset \mathcal{A}_i.
    \label{eq.part.depend.assume}
\end{align}
Under this assumption, the loss function of the form:
\begin{align}
\lz(\S, \Pz) :=
    \mathbb{E}_{\x \sim P^\star(\X)} 
    \Big[
        \text{KL}\big( 
            P^\star (Y \mid \X = \x)
            \;\|\;
            \Pz (Y \mid \Z = \x \circledast \M(\S(\x))  )
        \big)
        +
        \lambda \| (\S(\x)) \|
    \Big].
    \label{eq.loss.Z2}
\end{align}
is an ill-posed proxy for 
\begin{align}
\lx(\S, \Px) =
    \mathbb{E}_{\x \sim P^\star(\X)} 
    \Big[
        \text{KL}\big( 
            P^\star (Y \mid \X = \x)
            \;\|\;
            \Px (Y \mid \{X_k=x_k\}_{k\in \S(\x)} )
        \big)
        +
        \lambda \| \S(\x) \|
    \Big].
    \label{eq.loss.Z}
\end{align}
In the sense that: 
\begin{align}
 \arg\min_{\S} \lz(\S, \Pz) \not= \arg\min_{\S} \lx(\S, \Px).   
\end{align}
\end{theorem}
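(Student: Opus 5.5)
The plan is to exhibit an explicit selector--predictor pair $(\S', \Pz')$ whose proxy loss is strictly smaller than the proxy loss of any minimiser of the intended loss. It follows that no minimiser of $\lx$ minimises $\lz$, so the two $\arg\min$ sets differ.

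First, I will characterise the minimisers of $\lx$. By the standing assumption that $\lambda$ is small enough that the minimiser of $\lx$ selects every important feature, together with the first condition in \eqref{eq.part.depend.assume} ($Y \notindep X_i \mid \X_{-i}=\x_{-i}$ for all $\x_{-i}$), any minimiser $\S^\star$ of $\lx$ must contain $i$ for $P^\star$-almost every $\x$ with $x_i \in \mathcal{A}_i$. Dropping $i$ would incur a strictly positive expected KL term, which dominates the $\lambda$ saving. Evaluated under $\lz$, the pair $(\S^\star, \Pz^\star)$ can do no better than the KL value achieved under $\lx$. The reason is that $\Z = \x \circledast \M(\S^\star(\x))$ carries at most the information in $(\{x_k\}_{k\in\S^\star(\x)}, \S^\star(\x))$, and for $\S^\star$ that information is already available to the $\lx$-optimal predictor. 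Hence its proxy loss is at least its KL term plus $\lambda\,\mathbb{E}\|\S^\star(\X)\|$, with $i$ always counted.

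Next, I will construct the leaking pair. Fix one cell, say $A_{i,1}$ (later I will choose the largest one, which gives the corollary). Define $\S'(\x) := \S^\star(\x)\setminus\{i\}$ when $x_i \in A_{i,1}$, and $\S'(\x) := \S^\star(\x)$ otherwise. The ablation symbol $\ast$ lies outside $\mathcal{X}_i$ (or, in the density case, is a null value), so the event $\{Z_i = \ast\}$ coincides almost surely with $\{X_i \in A_{i,1}\}$. Define $\Pz'(Y \mid \Z=\z)$ as follows: when $z_i=\ast$, use $P^\star(Y \mid X_i \in A_{i,1}, \X_{-i}$ restricted to the revealed coordinates$)$; otherwise use $\Pz^\star$. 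By the second condition in \eqref{eq.part.depend.assume}, $Y \indep X_i \mid X_i\in A_{i,1}, \X_{-i}$. So on this cell, knowing that $X_i$ lies in $A_{i,1}$ is exactly as informative as knowing $x_i$, and the KL term on $\{x_i\in A_{i,1}\}$ is unchanged from that of $(\S^\star,\Pz^\star)$. The regulariser, however, drops by $\lambda\,P^\star(X_i\in A_{i,1})>0$, where positivity is assumed for a nontrivial cell. Therefore $\lz(\S',\Pz') < \lz(\S^\star,\Pz^\star)$, which proves the claim.

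The main obstacle is the first step. I need to argue carefully that the $\lx$-minimiser must include $i$ on a set of positive measure, and that its $\lz$ value is not lowered by some alternative predictor that exploits the ablation pattern of the other coordinates. I will handle this by comparing pairs that share the same selector $\S^\star$. For fixed $\S^\star$, the only thing that changes between the two settings is the predictor. Because the optimal $\Pz$ for a fixed selector is the true conditional given $\Z$, the proxy KL term for $\S^\star$ is some value $K$. The key point is that the leaking construction achieves that same $K$ with strictly fewer selected features, and this strict regulariser gap is all the argument needs.
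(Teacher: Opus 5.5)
Your second paragraph is the paper's proof. It uses the same $\S'$ (drop $i$ exactly when $x_i\in A_{i,1}$) and the same $\Pz'$ (read $z_i=\ast$ as the event $X_i\in A_{i,1}$, otherwise act as $\Pz^\star$). It also makes the same cell-by-cell comparison, giving $\lz(\S',\Pz')<\lz(\S^\star,\Pz^\star)=\lx(\S^\star,\Px^\star)$. Your remark that strictness needs $P^\star(X_i\in A_{i,1})>0$ is a useful addition. The paper stops there, reading the theorem as ``the intended pair is not a minimiser of the proxy''.

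The genuine gap is in the stronger claim you add in your first and last paragraphs: that $(\S',\Pz')$ beats $\S^\star$ paired with \emph{any} predictor. Your step-one justification defeats itself. You argue that $\Z$ carries only $(\{x_k\}_{k\in\S^\star(\x)},\S^\star(\x))$ and that this is ``already available to the $\lx$-optimal predictor''. Suppose $\Px$ really may use the selected index set as information, i.e.\ $\Px$ is an arbitrary function of $(S,\x_S)$. Because $\ast\notin\mathcal{X}_k$, the map $\z\mapsto(\I(\z),\{z_k\}_{k\in\I(\z)})$ is a bijection. Then $\min_{\Px}\lx(\S,\Px)=\min_{\Pz}\lz(\S,\Pz)$ for every selector $\S$, and the two $\arg\min$ sets coincide, which is the negation of what you are proving. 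The theorem has content only if $\Px(Y\mid\{X_k=x_k\}_{k\in S})$ is a genuine conditional that is blind to the event that $S$ was selected. This is how the paper uses $\Px^\star$, conditioning it further on $X_i\in A_{i,1}$. But then $\Z$ reveals strictly more than $\Px$ sees, namely the ablation pattern $\S^\star(\x)$. A $\Pz$ exploiting that pattern could beat $\lx(\S^\star,\Px^\star)$, so your step-one inequality is unsupported.

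The fallback in your last paragraph has the same hole. Here $K$ is the value attained by the Bayes predictor $P^\star(Y\mid\Z^\star)$, with $\Z^\star=\X\circledast\M(\S^\star(\X))$, and that predictor may exploit the pattern. Your $\Pz'$ is built from $\Pz^\star$ and from $P^\star(Y\mid X_i\in A_{i,1},\text{revealed coordinates})$, and neither uses the pattern. So nothing shows that $\Pz'$ attains $K$; even off the cell it may lose more than the $\lambda P^\star(X_i\in A_{i,1})$ you save.

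Suppose you instead take the Bayes predictor for $\Z'=\X\circledast\M(\S'(\X))$, which is a coarsening of $\Z^\star$. Then you need $Y\indep X_i\mid X_i\in A_{i,1},\Z'$. This does not follow from \eqref{eq.part.depend.assume}, which conditions on all of $\X_{-i}$. By contrast, $\Z'$ reveals only part of $\X_{-i}$, plus $\S^\star(\X)$, which is itself a function of $X_i$ and of unrevealed coordinates. The paper makes the analogous passage from $\X_{-i}$ to the revealed coordinates for $\Px^\star$; the pattern is the extra ingredient your stronger version must control.

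So you have two options. You can drop the strengthening, in which case your argument is the paper's. Or you can add a hypothesis ensuring that, on the cell, $P^\star(Y\mid\Z^\star)$ depends on $x_i$ only through its cell. Then the Bayes predictor for $\Z'$ attains $K$, and the regulariser gap finishes the proof.
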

\emph{Proof. }
For any ablated vector $\z$, define $\mathcal{I}(\z)$ as the set of feature indices that are not ablated:
\begin{align}
    \mathcal{I}(\z) := \{k \in \{1, \ldots, d\} \text{ s.t. } z_k \neq \ast\}. 
\end{align}
It is easy to verify that, for any selector function $\S$, if $\z = \x \circledast \M(\S(\x))$ and $\ast \not \in \mathcal{X}$, then
\begin{align}
    \I (\z) = \S(\x), \; \text{ and } \;
    \{z_k\}_{k \in \I(\z)}
    =
    \{x_k\}_{k \in \S(\x)},
\label{eq.I=S}
\end{align}
This follows because
\begin{align*}
\left.
\begin{array}{c}
k \in \S(\x)     \;\Longrightarrow\;\; 
    [\M(\x)]_k = 1  \;\Longrightarrow\;\; 
    z_k=x_k     \;\Longrightarrow\;\; k \in \I(\z)
    \\
k \not\in \S(\x) \;\Longrightarrow\;\; 
    [\M(\x)]_k = 0  \;\Longrightarrow\;\; 
    z_k=\ast    \;\;\;\Longrightarrow\;\; k \not\in \I(\z)
\end{array}
\right.
\;, 
\qquad 
\forall \x \in \mathcal{X}, 
\forall k \in \{1, \ldots, d\}.
\end{align*}

Let
\begin{align}
 (\S^*, \Px^*) = \arg\min_{(\S, \Px)} \lx(\S, \Px).   
\end{align}
denote the intended selector--predictor pair, i.e., the pair that minimises the intended loss. We define $\Pz^*$ as the predictor on ablated vectors corresponding to the intended predictor $\Px^*$ :
\begin{align}
    \Pz^*(Y | \Z=\z) := \Px^*(Y \mid \{X_k = z_k\}_{k \in \mathcal{I}(\z)})
    \label{eq.Pz*}
\end{align}
Therefore, 
\begin{align}
\Pz^* (Y \mid \Z = \x \circledast \M(\S(\x))  ) 
\overset{\eqref{eq.Pz*}}{:=} \Px^* (Y \mid \{X_k = z_k \}_{k \in \I({\z})}) 
\overset{\eqref{eq.I=S}}{=} 
\Px^* (Y \mid \{X_k = x_k \}_{k \in \S({\x})}). 
\label{eq.P=Pz}
\end{align}
Substituting \eqref{eq.P=Pz} into the proxy loss  \eqref{eq.loss.Z2} gives
\begin{align}
    \lz(\S^*, \Pz^*) = \lx(\S^*, \Px^*).
    \label{eq.l'=l}
\end{align}
Thus, to prove the theorem, it is enough to construct another pair $(\S', \Pz')$ such that $\lz(\S', \Pz') < \lz(\S^*, \Pz^*)$. 

Assuming that the premises \eqref{eq.part.depend.assume} of the theorem hold, define 
\begin{align}
    \Pz'(Y | \Z=\z) := 
\begin{cases}
  \Px^*(Y \mid \{X_k=z_k\}_{k\in \I(\z)}, X_i\in A_{i,1}), & \text{if } 
  z_i = \ast, \\
  \Px^*(Y \mid \{X_k=z_k\}_{k\in \I(\z)}),  & \text{otherwise}.
\end{cases}
\label{eq.Pz}
\end{align}
and
\begin{align}
    \S'(\x) :=
    \begin{cases}
  \S^*(\x) \backslash \{i\}, & \text{if } 
  x_i \in A_{i,1}, \\
  \S^*(\x),  & \text{otherwise}.
\end{cases}
\label{eq.s'}
\end{align}
Intuitively, $\S'$ acts as the intended selector $\S^*$, except that when the $i$-th feature lies in the partition cell $A_{i,1}$, this feature is not selected. 
The predictor $\Pz'$ also acts like the intended predictor, except that when the $i$-th entry of its ablated input is $*$, it interprets this as evidence that $X_i \in A_{i,1}$.

The dependence assumption, $Y \notindep X_i \mid \X_{-i}=\x_{-i}$ and the choice of $\lambda$, entails that regardless of the realisation of $\X$, $X_i$ is selected by the intended selector $S^*$. Hence, 
\begin{align}
    i \in \S^*(\x)
    \;\Longrightarrow\;\; 
    [\M(\S^*(\x))]_i = 1
    \;\Longrightarrow\;\; 
    [\x \circledast \M(\S^*(\x))]_i = x_i \neq *
    , \qquad \forall \x \in \mathcal{X}, \forall i \in \S^*(\x).
    \label{eq.zi=xi}
\end{align}
Now consider two cases, 
First, suppose $x_i \not \in A_{i,1}$. Then, 
\begin{align}
\S'(\x) \overset{\eqref{eq.s'}}{=} \S^*(\x)
\label{eq.s'=s*}
\end{align}
Hence, if $\z = \x \circledast \M(\S'(\x))$, then 
\begin{align}
     z_i
     \overset{\eqref{eq.s'=s*}}{=}
     [\x \circledast \M(\S^*(\x))]_i 
     \overset{\eqref{eq.zi=xi}}{=} x_i \neq \ast 
     \;\;\Longrightarrow\;\;\; 
     \Pz'(Y | \Z=\z) 
     \overset{\eqref{eq.Pz}}{=} 
     \Px^*(Y \mid \{X_k=z_k\}_{k\in \I(\z)}).
     \label{eq.Pz'=Px*}
\end{align}
Thus \eqref{eq.s'=s*} and \eqref{eq.Pz'=Px*} entail that on this region, the instance-wise loss contribution of $(\S', \Pz')$, is the same as that of $(\S^*, \Px^*)$:
\begin{align}
    \lz(\S', \Pz')  =\lx(\S^*, \Px^*), \qquad \forall \x \in \mathcal{X} \text{ s.t.} x_i \not \in A_{i,1}.
    \label{eq.subspace1}
\end{align}
Second suppose $\x \in A_{i,1}$. 
Then $\S'(\x) 
    \overset{\eqref{eq.s'}}{=}
    \S^*(\x) \backslash \{i\}
    $
    which entails:
    \begin{align}
    z_i = \ast,
    \;\text{ and }\;\;
    \I(\z) = \S^*(\x) \backslash \{i\}.
    \label{eq.and}
    \end{align}
    By \eqref{eq.and} and \eqref{eq.Pz}:
    \begin{align*}
    \Pz'(Y | \Z=\z) = 
  \Px^*(Y \mid \{X_k=z_k\}_{k\in \S^*(\x) \backslash\{i\}}, X_i \in A_{i,1})
  = \Px^*(Y \mid \{X_k=z_k\}_{k\in \S^*(\x)}),
\end{align*}
in which the last equality holds by the theorem's assumption that $Y$ depends on $X_i$ through its partition.   
    
Therefore, for all $\x \in \mathcal{X}$ such that $x_i \in A_{i,1}$, the divergence terms in \eqref{eq.loss.Z} and \eqref{eq.loss.Z2} are equal but  
$\|\S'(\x)\| = \|\S^*(\x) \backslash \{i\} \| = \|\S^*(\x)\| - 1$.
Thus, on this region, the instance-wise loss contribution of $(\S', \Pz')$ is strictly smaller than that of $(\S^*, \Px^*)$, while as we showed by \eqref{eq.subspace1}, on the complement region $x_i \not \in A_{i,1}$, 
the instance-wise loss contributions of $(\S', \Pz')$ and $(\S^*, \Px)$ are equal. 
Therefore
\[
 \lz(\S', \Pz') < \lz(\S^*, \Pz^*) 
 \overset{\eqref{eq.l'=l}}{=} 
 \lx(\S^*, \Px^*).   
\]
which completes the proof. 
\QED

\begin{corollary}[Lower bound on the achievable feature-importance misidentification rate]
\label{corollary:corol1}
Suppose the support $\mathcal{X}_i$ of a random variable $X_i \in \X$ be partitioned as $\mathcal{X}_i := \bigsqcup_{k=1}^{N} A_{i,k}$ and 
suppose $Y$ depends on $X_i$ only through its partition, in the sense of \eqref{eq.part.depend.assume}. Then a lower bound on the achievable rate at which $X_i$ is ablated, and hence misidentified as unimportant, is $P^\star(X_i\in A_{i,\max})$
where
\[
A_{i,\max} \in \arg\max_{A_{i,k}\in\{A_{i,1},\ldots,A_{i,N}\}}
P^\star(X_i\in A_{i,k}).
\]
\end{corollary}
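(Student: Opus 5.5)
The plan is to reuse the construction in the proof of Theorem~\ref{thm:main_theorem}, with the distinguished cell $A_{i,1}$ replaced by a cell of maximal probability. Since the labelling of the partition cells in \eqref{eq.part.depend.assume} is arbitrary, I relabel them so that $A_{i,1}=A_{i,\max}$. This is harmless because the assumption holds for every cell $A_{i,k}$.

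Let $(\S^*,\Px^*)$ be the intended minimiser of $\lx$, and let $\Pz^*$ be defined by \eqref{eq.Pz*}. I then define $(\S',\Pz')$ exactly as in \eqref{eq.Pz} and \eqref{eq.s'}, with $A_{i,1}=A_{i,\max}$. This pair is a legitimate selector--predictor pair for the proxy loss, so it is achievable.

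The proof of Theorem~\ref{thm:main_theorem} gives two facts about this pair.
\begin{itemize}
\item On $\{x_i\notin A_{i,\max}\}$, the pointwise contributions to $\lz(\S',\Pz')$ and $\lx(\S^*,\Px^*)$ coincide.
\item On $\{x_i\in A_{i,\max}\}$, the KL terms coincide and the penalty drops by $\lambda$.
\end{itemize}
Hence $(\S',\Pz')$ has strictly smaller proxy loss while preserving the predictive (KL) term.

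Under $\S'$, feature $i$ is ablated exactly on the event $\{x_i\in A_{i,\max}\}$. By \eqref{eq.zi=xi} we know $i\in\S^*(\x)$ for every $\x$, so this event is precisely where $i$ is wrongly dropped. The misidentification rate of $\S'$ is therefore $P^\star(X_i\in A_{i,\max})$. It is attained by a pair that strictly beats the intended pair on $\lz$, so the achievable rate is at least this quantity, which gives the stated lower bound.

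To explain why the largest cell is the right choice, I will note that the same construction works for any cell $A_{i,k}$ and yields rate $P^\star(X_i\in A_{i,k})$. Taking the maximum over $k$ gives the best bound obtainable this way. Because $\Pz'$ decodes $\ast$ as membership in a single cell, the construction cannot ablate $X_i$ on two distinct cells at once without losing KL. So the maximal cell is the natural bound, not an exact rate.

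The main obstacle is pinning down what ``achievable'' means. I will interpret it as: there exists a pair with proxy loss no larger than, here strictly smaller than, that of the intended pair whose selector ablates $X_i$ with probability $P^\star(X_i\in A_{i,\max})$. Under this reading the corollary follows directly from the construction above and needs no optimality claim about the $\lz$-minimiser itself.

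A secondary point is that the corollary assumes the partition covers all of $\mathcal{X}_i$. This lets the dependence assumption apply everywhere, so the event $\{X_i\in A_{i,\max}\}$ is well defined with the stated probability.
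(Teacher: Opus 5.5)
Your proposal is correct and follows the paper's proof: let the maximal-probability cell play the role of $A_{i,1}$ in the construction of Theorem~\ref{thm:main_theorem}, observe that $\S'$ ablates $X_i$ exactly on $\{X_i\in A_{i,\max}\}$, and read off the achievable rate, with your reading of ``achievable'' matching the paper's Remark. One small addition: the strict decrease in $\lz$ requires $P^\star(X_i\in A_{i,\max})>0$, and your choice guarantees this, since the cells cover $\mathcal{X}_i$ and so the largest has probability at least $1/N$.
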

\emph{Proof.}
In Theorem~\ref{thm:main_theorem}, any partition cell $A_{i,k}$ of $\mathcal{A}_i = \mathcal{X}_i$ can be chosen to play the role of $A_{i,1}$. For any such choice, the constructed leakage solution ablates $X_i$ whenever $X_i\in A_{i,k}$. Therefore, choosing the cell $A_{i,\max}$ yields an achievable misidentification rate of $P^\star(X_i\in A_{i,\max})$.
\QED

\noindent\textbf{Remark.}
Corollary~\ref{corollary:corol1} is an existence result. It does not imply that every selector--predictor pair trained to minimize the proxy loss will attain the stated misidentification rate. This is because the training procedure may fail to discover the leakage strategy constructed in the proof, for example, by converging to a local minimum. Nor does the corollary imply that the stated rate $P^\star(X_i\in A_{i,\max})$ is the largest possible misidentification rate: other leakage strategies may achieve lower proxy loss while ablating $X_i$ on a larger subset of the input space. Rather, the corollary guarantees that there exists a selector--predictor pair whose proxy loss is lower than that of the intended selector--predictor pair and whose misidentification rate is $P^\star(X_i\in A_{i,\max})$. Thus, this quantity provides a lower bound on the largest misidentification rate achievable through information leakage.

\subsection{Comparison to \citet{jethani2021have}}
\label{app:comparison_jethani}

\citet{jethani2021have} were the first to identify information leakage in jointly trained selector--predictor networks, which they refer to as joint amortized explanation methods (JAMs). They proved this problem in two scenarios.
First, in a classification task where the number of possible outputs does not exceed the number of input features, the selector can encode the class label through the binary ablation mask, for instance, via a one-hot vector. This allows the predictor to decode the output from the mask rather than from genuinely informative selected features. The second scenario is when the output is generated by a tree-structured process whose leaves use distinct predictive feature sets. Here, the selected leaf-specific feature set can reveal the active branch. As a result, non-leaf (control-flow) features can be omitted while preserving the predictive likelihood, and the sparsity penalty then favors their omission.

In this paper, we study the JAM information leakage problem in a setting where the output $Y$ may depend on a feature $X$ through the partition of the input space induced by that feature. As a simple example, consider a binary response whose probability changes according to the sign of $X$, e.g., $P(Y=1\mid X)=p_-$ if $X<0$ and $P(Y=1\mid X)=p_+$ if $X\geq 0$, with $p_-\neq p_+$. In this case, the relevant information is whether $X \in (-\infty,0)$ or $X \in [0,+\infty)$, rather than the exact value of $X$. We refer to such variables as \emph{switch features} throughout.
This setting is more general than the scenarios studied by \citet{jethani2021have} and covers cases where the output is either real-valued or discrete, corresponding to regression and classification tasks, respectively. There is no restriction on the number of classification labels, and if the output is generated by a tree-structured process, there is no restriction on the feature sets used in different leaves. 
In \Cref{appendix:theorem.proof}, we show that, in this relaxed setting, JAM algorithms such as L2X and INVASE can still suffer from information leakage. Specifically, we show that the loss function used by such algorithms is an ill-posed proxy for the expected KL loss that they aim to minimize, in the sense that the selected features that minimize the two loss functions are not the same.

\subsection{Sampling ${\hat{\mathbf{x}}}$ Using Model-X Knockoffs}
\label{app:conditional_replacement_alternatives}

As outlined in the paper, leakage can occur when a predictor network learns extra information from its masked inputs. A jointly trained selector--predictor network can exploit this knowledge by masking important features while retaining prediction accuracy, for example with ablation to 0. In Hide\&Seek, we draw our replacement values from the product of the marginals, per \cref{eq:draw_product_marginals2}. Our results in \Cref{subsec:switch_analysis}, \Cref{subsec:correlated_synthetic_data} and \Cref{app:correlated_synthetic_data} demonstrate the absence of leakage in this setting. 
Nonetheless, as outlined in \Cref{sect:problem_formulation}, a stronger theoretical guarantee against leakage can be achieved by sampling replacement values from \cref{eq:draw_joint2}. We present an alternative method using Model-X Knockoffs to more closely approximate this sampling.

\begin{equation}
\label{eq:draw_product_marginals2}
    \hat{\mathbf{x}} \sim \prod_{i\in D }{p}(X_i)
\end{equation}

\begin{equation}
\label{eq:draw_joint2}
    \hat{\x}_{\bar{S}} \sim p(\mathbf{X}_{\bar{S}} \mid \mathbf{x}_S)
\end{equation}

\paragraph{Model-X Knockoffs.} A random vector $\tilde{X} \in \mathbb{R}^p$ is a Model-X knockoff of $X$ constructed to satisfy the following two properties \citep{candes2018panning}:

\begin{enumerate}
    \item[(a)] $(X, \tilde{X})_{\text{swap}(S)} \stackrel{d}{=} (X, \tilde{X})$ for any subset $S \subset \{1, \dots, p\}$
    
    where $(\cdot)_{\text{swap}(S)}$ swaps the entries $X_j$ and $\tilde{X}_j$ for each $j \in S$.
    
    \item[(b)] If there is a response variable $Y$, then $\tilde{X} \indep Y \mid X$
    
\end{enumerate}

In our implementation, we generate Model-X knockoffs \cite{romano2020deep} using the package provided at \url{https://web.stanford.edu/group/candes/deep-knockoffs/}. The results for Experiment~\ref{subsec:synthetic_data} are shown in \Cref{tab:knockoff_vs_marginal} and the results for Experiment~\ref{subsec:correlated_synthetic_data} are shown in \Cref{fig:correlated_knockoff_comparison}. These results show that marginal sampling, which was used in the paper, outperforms knockoff sampling.

\begin{figure}[H]
    \centering
    
    \begin{subfigure}[c]{0.48\linewidth}
        \centering
        \setlength{\tabcolsep}{5pt} 
        \begin{tabular}{l|cc|cc}
        \hline
        Model & \multicolumn{2}{c|}{Marginal sampling} & \multicolumn{2}{c}{Knockoff sampling} \\
         & TPR & FDR & TPR & FDR \\
        \hline
        Syn1 & \textbf{100} & \textbf{0} & 99  & \textbf{0} \\
        Syn2 & \textbf{100} & \textbf{0} & \textbf{100} & \textbf{0} \\
        Syn3 & \textbf{99}  & \textbf{0} & 98  & \textbf{0} \\
        Syn4 & \textbf{99}  & \textbf{4} & \textbf{99}  & \textbf{4} \\
        Syn5 & \textbf{97}  & \textbf{3} & 96  & \textbf{3} \\
        Syn6 & \textbf{98}  & \textbf{4} & 89  & \textbf{4} \\
        \hline
        \end{tabular}
        \caption{} 
        \label{tab:knockoff_vs_marginal}
    \end{subfigure}
    \hfill 
    \begin{subfigure}[c]{0.48\linewidth}
        \centering
        \includegraphics[width=\linewidth]{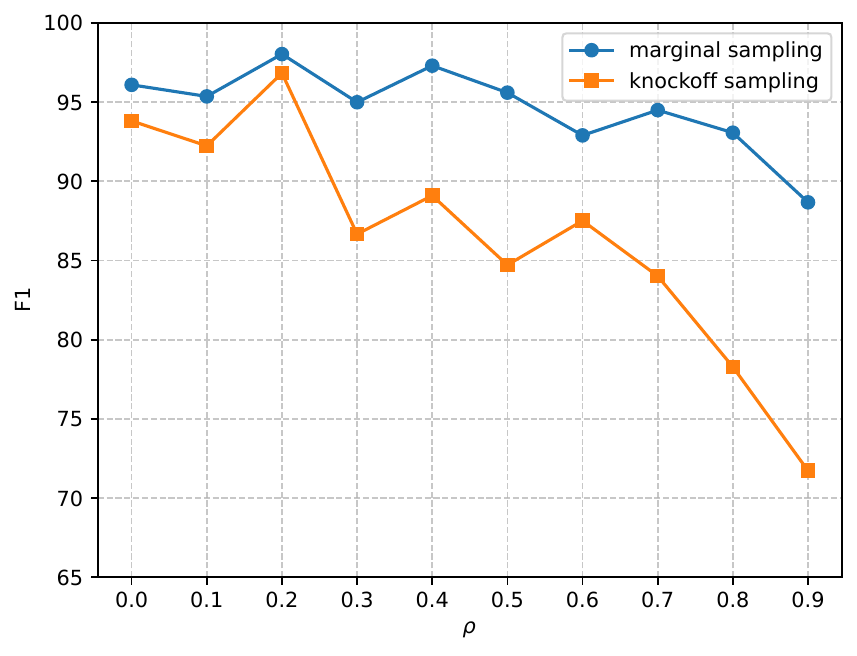}
        \caption{} 
        \label{fig:correlated_knockoff_comparison}
    \end{subfigure}
    
    \caption{Comparison of alternative methods for sampling $\hat{\mathbf{x}}$ in the Hide\&Seek architecture, showing that marginal sampling outperforms knockoff sampling. (a) can be directly compared to \Cref{tab:tpr_fdr_transposed} in the paper and (b) can be directly compared to \cref{fig:correlated_synthetic}.}
    \label{fig:knockoff_combined_results}
\end{figure}

Note that while the exchangeability property $(X, \tilde{X})_{\text{swap}(S)} \stackrel{d}{=} (X, \tilde{X})$ is guaranteed for any fixed subset $S$, this invariance does not hold in our context. This is because the selector--predictor architecture determines the masked subset using the selector network, meaning the chosen features are a function of the data itself, $S = f(X)$. Consequently, $(X, \tilde{X})_{\text{swap}(f(X))} \stackrel{d}{\neq} (X, \tilde{X})$.

\subsection{Preventing Information Leakage}
\label{app:comparing_sampling_distributions}

Let us recall the following:
\begin{itemize}
    \item $\mathbf{X}:=(X_1, \ldots, X_d) \in \mathcal{X}$,
    \item $D := \{1, \ldots, d\}$,
    \item $S \subseteq D \quad \text{the indices of the selected features}$,
    \item $\bar{S} := D \setminus S \quad \text{the indices of the unselected features}$
    \item $\mathbf{z} = \mathbf{m} \odot \mathbf{x} + (1-\mathbf{m}) \odot \hat{\mathbf{x}}$
\end{itemize}

In \Cref{sect:problem_formulation} we proposed that information leakage is prevented when replacement values are drawn from $\hat{\x}_{\bar{S}} \sim p(\mathbf{X}_{\bar{S}} \mid \mathbf{x}_S)$. The justification is as follows:

If the modified signal, $\z$, received by the predictor network is distributed according to the original feature distribution, i.e. $p(\Z)=p(\X)$, then there is no way for the predictor to differentiate between the two and the information leakage becomes impossible. While, in our paper, we use a continuous relaxation where $\mathbf{m} \in [0, 1]^d$, let us consider the binary condition where $\mathbf{m} \in \{0, 1\}^d$. 

In this setting, let $\Z := (\X_S, \hat{\X}_{\bar{S}})$. Thus, $p(\Z) = p(\X_S, \hat{\X}_{\bar{S}}) = p(\X_S)p(\hat{\X}_{\bar{S}} \mid \X_S)$. This distribution is equal to $p(\X) = p(\X_S, \X_{\bar{S}})$ if and only if $p(\hat{\X}_{\bar{S}} \mid \X_S) = p(\X_{\bar{S}} \mid \X_S)$. Thus to prevent any possible leakage $\hat{\x}$ should ideally be drawn from the conditional distribution: $\hat{\x}_{\bar{S}} \sim p(\mathbf{X}_{\bar{S}} \mid \mathbf{x}_S)$.



\subsection{Correlated Synthetic Data}
\label{app:correlated_synthetic_data}

\Cref{tab:switch_accuracy} shows the switch feature accuracy for each of the models and $\rho$ values in the \Cref{subsec:correlated_synthetic_data} experiment. It demonstrates that even with marginal sampling per \cref{eq:draw_product_marginals}, the Seek module in Hide\&Seek was unable to learn that replacement values for the switch feature were out-of-distribution.

\begin{table}[H]
\centering
\small
\setlength{\tabcolsep}{4pt}
\caption{Switch accuracy (identifying $X_{11}$ as important) across $\rho$ and models. Each value is the mean switch accuracy across Syn4, Syn5 and Syn6 for a single run of each. Note that some variability in results is expected, as indicated by the boxplots in \Cref{fig:switch_acc}.}
\begin{tabular}{l|cccccc}
\hline
$\rho$ & \text{Hide\&Seek} & \text{INVASE} & \text{L2X} & \text{LIME} & \text{REAL-x} & \text{SHAP} \\
\hline
0.0    &      0.998 &   0.547 & 0.577 & 0.179 &   1.000 & 0.813 \\
0.1    &      0.999 &   0.059 & 0.557 & 0.156 &   1.000 & 0.824 \\
0.2    &      0.996 &   0.720 & 0.578 & 0.196 &   1.000 & 0.850 \\
0.3    &      0.999 &   0.561 & 0.547 & 0.285 &   1.000 & 0.815 \\
0.4    &      1.000 &   0.695 & 0.613 & 0.210 &   1.000 & 0.802 \\
0.5    &      0.999 &   0.788 & 0.571 & 0.339 &   1.000 & 0.820 \\
0.6    &      1.000 &   0.747 & 0.563 & 0.372 &   1.000 & 0.841 \\
0.7    &      1.000 &   0.688 & 0.554 & 0.675 &   0.967 & 0.901 \\
0.8    &      1.000 &   0.786 & 0.538 & 0.936 &   0.992 & 0.906 \\
0.9    &      0.994 &   0.844 & 0.555 & 0.875 &   0.993 & 0.967 \\
\hline
\end{tabular}
\label{tab:switch_accuracy}
\end{table}

\clearpage

\section{Model infrastructure}
\label{app:model_infrastructure}

\textbf{Hide\&Seek}. 
Hide\&Seek consists of two fully connected, feed-forward neural networks with ReLU activation functions. The last layer activation function of \emph{Hide} is an element-wise sigmoid to ensure that the mask $\m\in [0,1]^d$, while the last layer of \emph{Seek} uses a softmax activation.
Each network has two hidden layers with ReLU activation functions and each hidden layer has $32$ dimensions. The model is trained in 500 epochs without batching. We use the Adam optimizer ($\text{learning rate} = \num{0.001}$), and model weights are initialized using the default PyTorch setting. The implementation is based on PyTorch v2.7.1 with CUDA 12.8.
At each epoch, the training data columns are internally shuffled with replacement to create a dataset from which to draw ${\xhj}$ values. $\lambda_{\text{max}}=0.3$ for the synthetic data in \Cref{subsec:synthetic_data}. Our code is available at https://github.com/talellinson/hide-and-seek-icml2026.

\textbf{INVASE}. The implementation of INVASE uses the code in \url{https://github.com/iclr2018invase/INVASE}. Specifically, the selector (actor) network has two hidden layers, each with $100$ dimensions. The predictor (critic) network has two hidden layers, each with $200$ dimensions. The number of training epochs is $10{,}000$, the batch size is $1,000$ and $\lambda=0.1$ for the synthetic data in \Cref{subsec:synthetic_data}. 

\textbf{REAL-x}. The implementation of REAL-x uses the code in \url{https://github.com/rajesh-lab/realx}. Specifically, the selector network has two hidden layers, each with $100$ dimensions. The predictor network has two hidden layers, each with $200$ dimensions. The number of training epochs is $500$, the batch size is $1,000$ and $\lambda=0.15$ for the synthetic data in \Cref{subsec:synthetic_data}. 

\textbf{L2X}. The implementation of L2X uses the code in \url{https://github.com/Jianbo-Lab/L2X/tree/master}. Like INVASE, there are two networks, each with two hidden layers. Each hidden layer of the first network has $100$ dimensions and each hidden layer of the second network has $200$ dimensions.

\textbf{LIME}. The implementation of LIME uses the code in \url{https://github.com/marcotcr/lime/tree/master}. The baseline models for our Synthetic and MNIST data can be found in our repository.

\textbf{SHAP}. The implementation of SHAP uses the code in \url{https://github.com/shap/shap}. We explored two implementations of the SHAP package: KernelExplainer and TreeExplainer. Kernel SHAP uses weighted linear regression, similarly to LIME \citep{lundberg2017unified} and can be run on neural networks. Tree SHAP is a fast, tree-based algorithm that works with ensembles of trees. Tree SHAP performed better on our synthetic data, so we used it with a base XGBoost predictor \cite{chen2016xgboost}. This combination explains the fast run time in Appendix~\ref{app:time}. The XGBoost model uses the code in \url{https://pypi.org/project/xgboost/}. The hyperparameters were chosen after tuning and are: \{'objective': 'binary:logistic', 'eval\_metric': 'logloss', 'max\_depth': 5, 'eta': 0.1, 'colsample\_bytree': 0.9, 'num\_boost\_round': 100\} in \Cref{subsec:synthetic_data}.          

\textbf{RForest}. The implementation of RForest uses the code in \url{https://scikit-learn.org/stable/modules/generated/sklearn.ensemble.RandomForestClassifier.html}. The hyperparameters are: \{criterion='gini', n\_estimators=100, max\_depth=5\}.

\textbf{LASSO}. The implementation of LASSO uses the code in \url{https://scikit-learn.org/stable/modules/generated/sklearn.linear_model.LogisticRegression.html}. We use logistic regression with an $L_1$ penalty.

\subsection{Hardware}
\label{app:hardware}

Experiments were conducted on the following hardware:
\begin{itemize}
    \item AMD EPYC 9354P 3.25GHz 32 cores 256MB L3 Cache (Max Turbo Freq. 3.75GHz)
    \item 192GB 4800MHz ECC DDR5-RAM (Twelve Channel)
    \item 1.92TB NVMe SSD Drive and 1.92TB NVMe SSD Drive
    \item 2x NVIDIA L4 (7,680 Cores, 240 Tensor Cores, 24GB Memory) GPUs
\end{itemize}

\section{Metric calculations}
\label{app:metric_calculations}

\paragraph{Explaining the TPR, FDR and F1 metrics.} 
To compute the TPR, FDR and F1 values across a dataset, we first calculated the TPR, FDR and F1 score for each input instance (e.g. row in tabular data) using \cref{eq:tpr_fdr} and \cref{eq:f1}. The mean was then taken across the entire dataset. Where specified, each experiment was run 20 times using different seeds, with the median values reported in the tables. The full distributions for \Cref{subsec:synthetic_data} are shown in Figure~\ref{fig:boxplots_TPR_FDR_F1} as boxplots.

\begin{align}
\text{TPR} &= \frac{\text{true positives}}{\text{true positives + false negatives}} &
\text{FDR} &= \frac{\text{false positives}}{\text{true positives + false positives}}
\label{eq:tpr_fdr}
\end{align}

\begin{equation}
\text{F1 score} = 2 \cdot \frac{\text{Precision} \cdot \text{Recall}}{\text{Precision} + \text{Recall}}
\label{eq:f1}
\end{equation}

where $\text{Recall} = \text{TPR}$ and $\text{Precision} = 1 - \text{FDR}$.

\begin{figure}[H]
    \centering

    \begin{subfigure}[b]{0.45\textwidth}
        \centering
        \includegraphics[width=\textwidth]{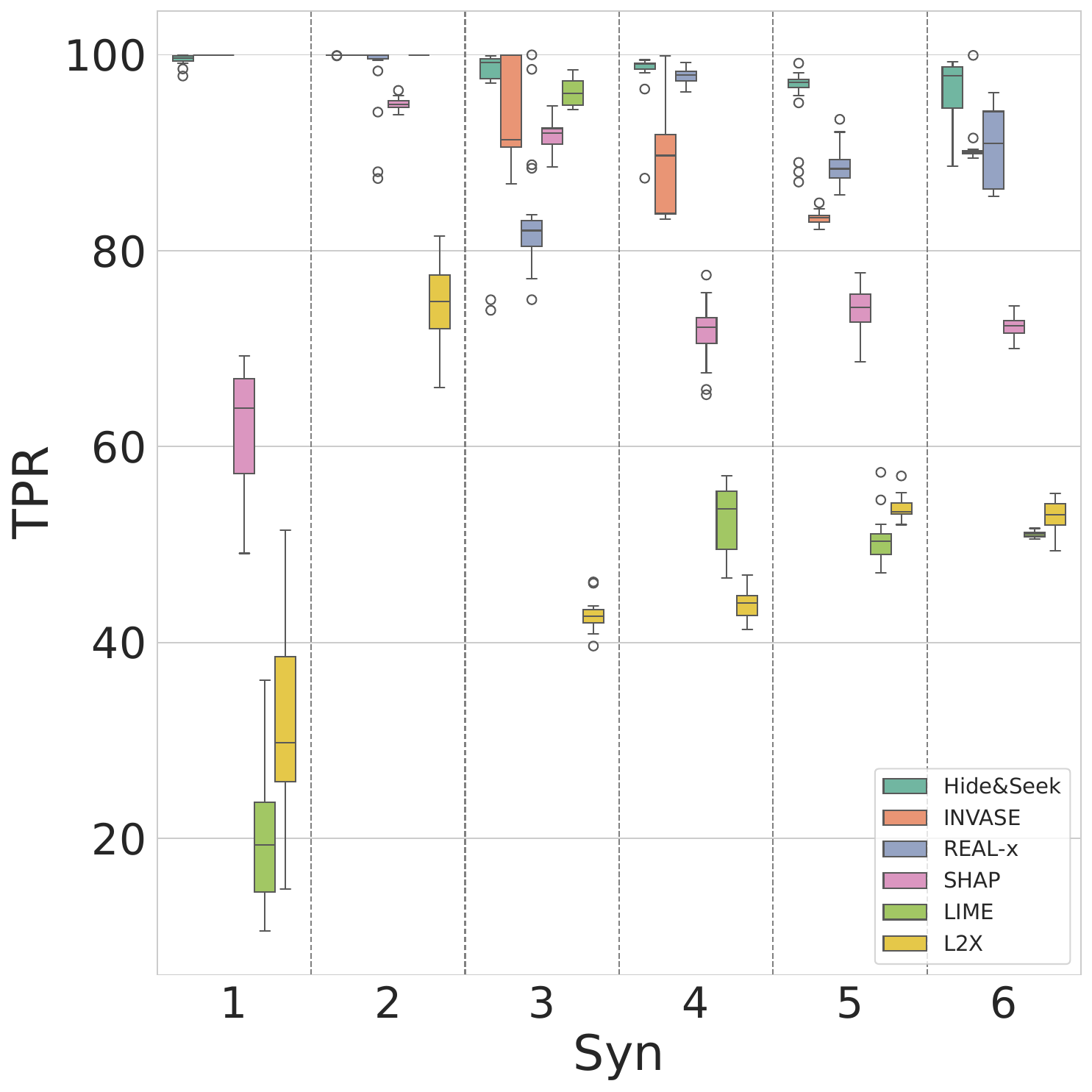}
        \caption{} 
        \label{fig:subfig-a}
    \end{subfigure}%
    \hfill
    \begin{subfigure}[b]{0.45\textwidth}
        \centering
        \includegraphics[width=\textwidth]{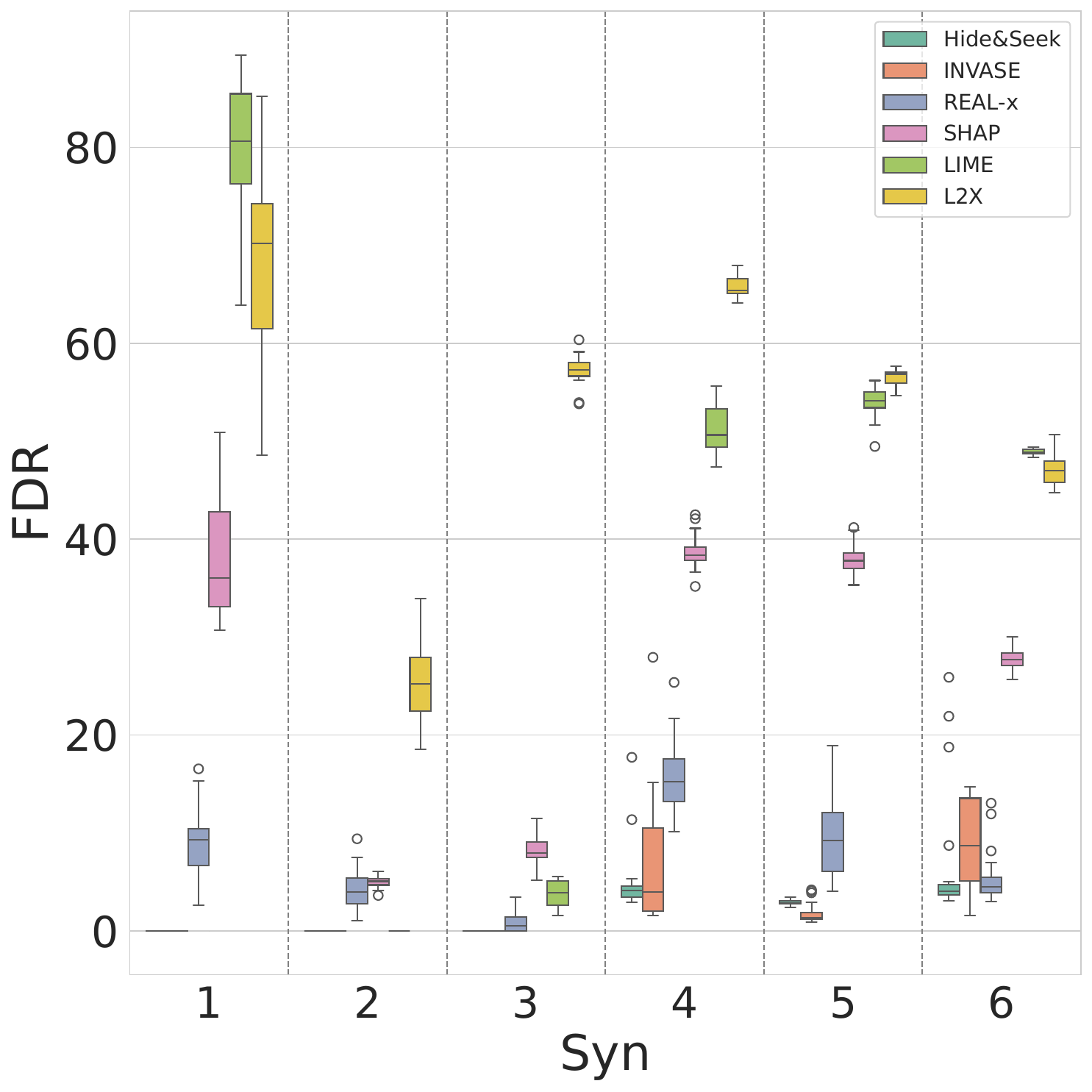}
        \caption{} 
        \label{fig:subfig-b}
    \end{subfigure}

    \begin{subfigure}[b]{0.45\textwidth}
        \centering
        \includegraphics[width=\textwidth]{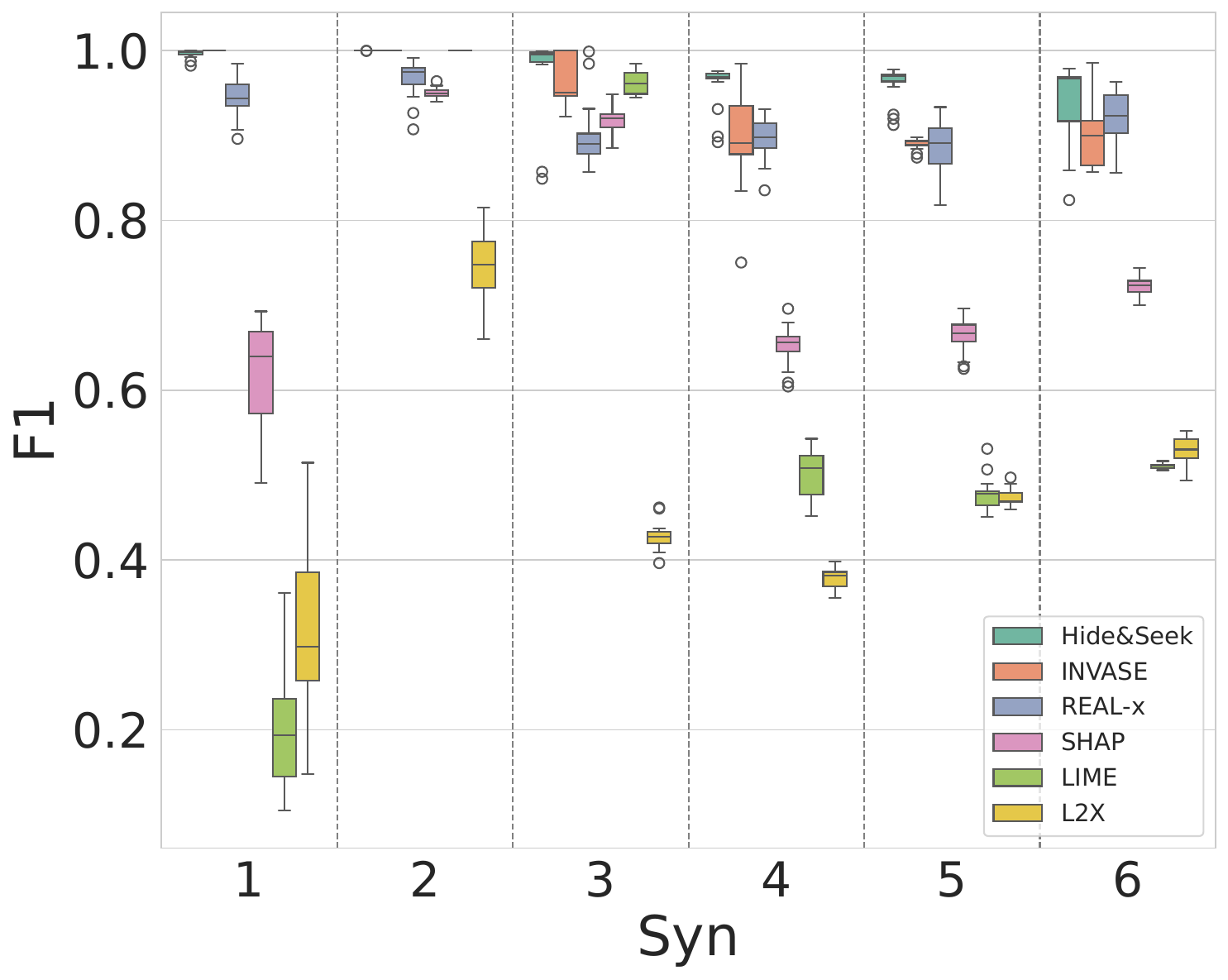}
        \caption{} 
        \label{fig:subfig-c}
    \end{subfigure}

    \caption{Distributions of (a) mean TPR, (b) mean FDR, and (c) mean F1 scores for feature identification across Syn1-6. Each boxplot shows the distribution across 20 experiments. The medians of the TPR and FDR boxplots are reported in Table~\ref{tab:tpr_fdr_transposed}.}
    \label{fig:boxplots_TPR_FDR_F1}
\end{figure}

\section{Further analyses}
\label{app:further_syn_analysis_heading}

\Cref{app:further_syn_analysis_heading} contains analyses and experiments relating to the synthetic data experiment~\ref{subsec:synthetic_data}.

\subsection{Mask Distributions}
\label{app:mask_distributions}
\Cref{fig:mask_distribution} shows the learned mask distributions for the six synthetic experiments in \Cref{subsec:synthetic_data}. Note the close alignment with the data-generating rules defined in \Cref{subsec:synthetic_data}.

\begin{figure}[H]
    \centering
    \includegraphics[width=0.84\linewidth]{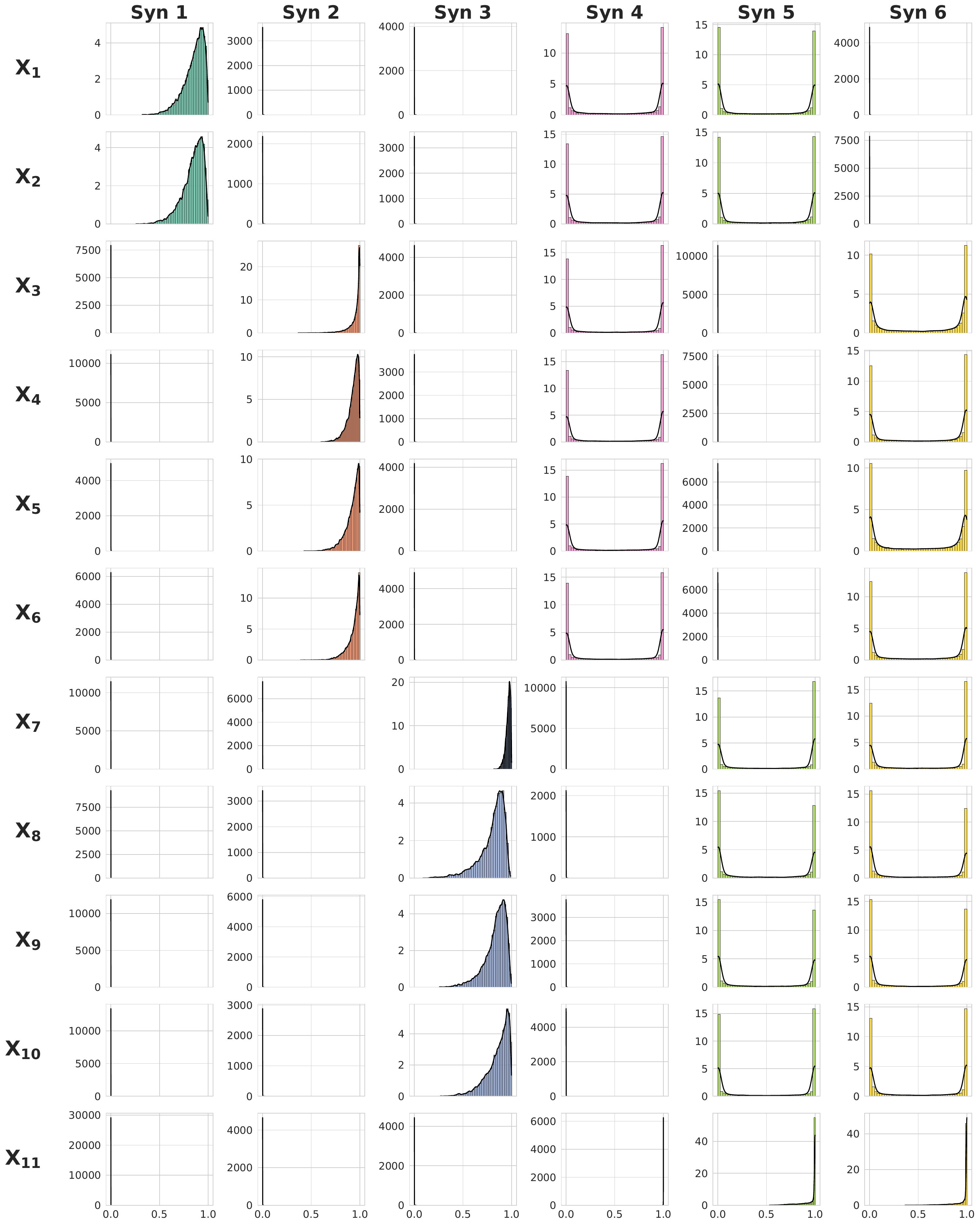}
    \caption{The mask distribution for the 6 synthetic experiments in \Cref{subsec:synthetic_data}, for one of the 20 experiments. Shown are the histograms and associated KDE plots. $X_{11}$ is the switch-feature, used in Syn4--6.}
    \label{fig:mask_distribution}
\end{figure}

\subsection{Predictive Performance}
\label{app:auroc_syn1-6}

\Cref{fig:auroc_performance} shows the predictive performance (AUROC) of the models used in \Cref{subsec:synthetic_data}. Hide\&Seek has consistently higher predictive performance than comparable selector--predictor models INVASE, L2X and REAL-x, despite having significantly fewer parameters.

\begin{figure}[H]
    \centering
    \includegraphics[width=0.5\linewidth]{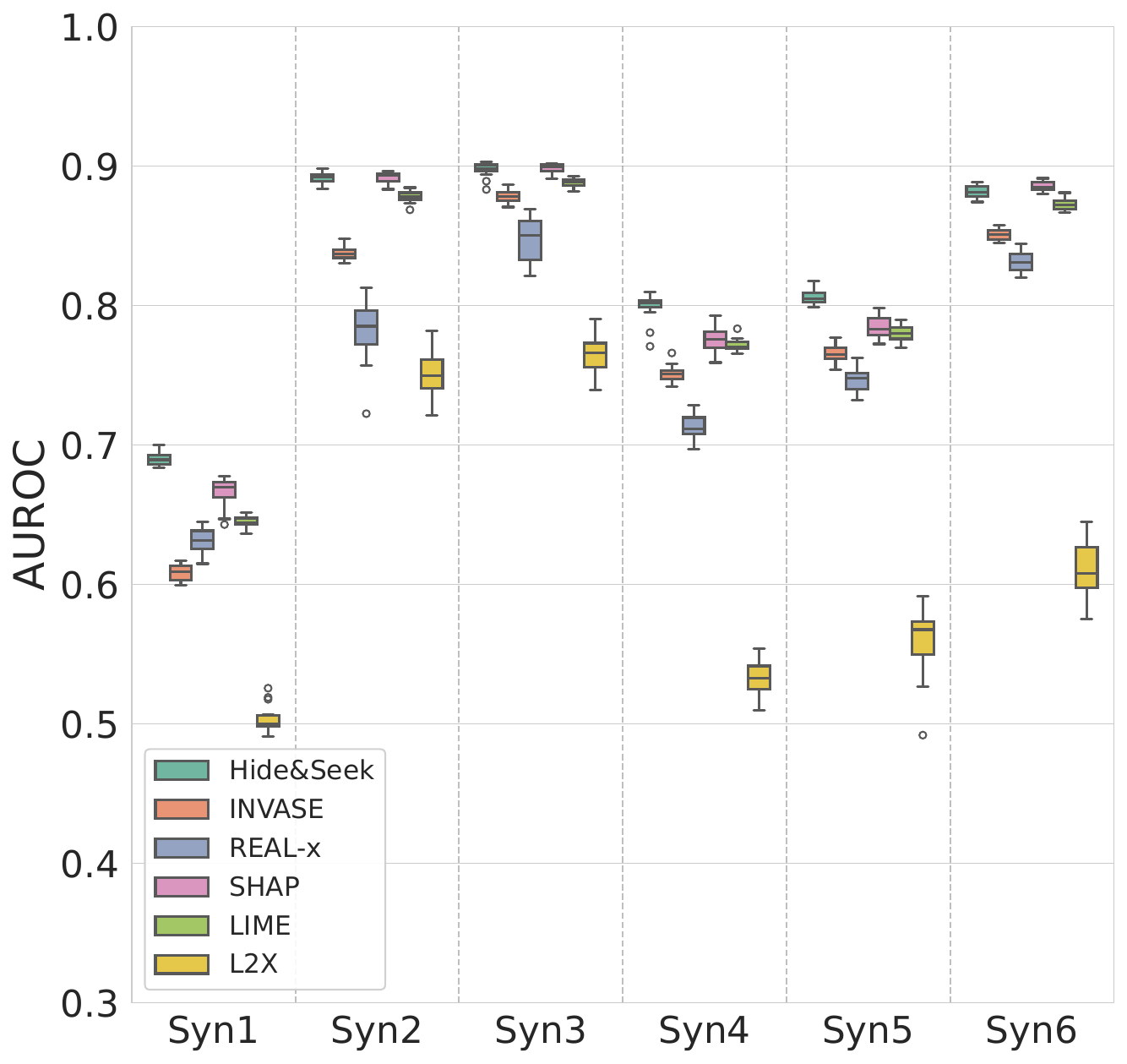}
    \caption{Predictive performance (AUROC) of the models in \Cref{subsec:synthetic_data}.}
    \label{fig:auroc_performance}
\end{figure}

The poorer predictive performance of REAL-x is likely due to out-of-distribution draws when predicting on highly parsimonious data \citep{hooker2019benchmark}. Recall that the predictor network in REAL-x is trained disjointly on feature masks drawn from a $\text{Bernoulli(0.5)}$ distribution.

\subsection{$\lambda$ Sensitivity}
\label{app:tuning}

Three of the models: Hide\&Seek, INVASE and REAL-x have the parameter $\lambda$ (or $\lambda_{\text{max}}$ for Hide\&Seek, which we will refer to as $\lambda$, here) which balances the trade-off between parsimony and prediction accuracy. When tuning, AUROC is typically calculated on a validation dataset for varying values of $\lambda$. Another available metric is the percentage significance, which reports the proportion of instance-wise features that the model has deemed important. The goal is to choose a $\lambda$ which provides high parsimony while preserving high prediction accuracy. \Cref{fig:all_tuning_performance} shows these metrics on a validation dataset for each of the models, for varying values of $\lambda$. It provides a number of useful insights.

Firstly, we see that Hide\&Seek has a robust sensitivity to varying values of $\lambda$. While INVASE and REAL-x quickly become too parsimonious as $\lambda$ increases, Hide\&Seek maintains a percentage significance close to the ground truth. This is likely due to the annealing schedule, which ramps up the weight of the parsimony term in the loss function towards the end of training. Secondly, Hide\&Seek maintains a strong AUROC, with almost all of its AUROC values above 0.8. Conversely, REAL-x quickly loses prediction accuracy as $\lambda$ increases. Recall that the predictor network in REAL-x is trained disjointly on feature masks drawn from a $\text{Bernoulli(0.5)}$ distribution. Therefore, when a highly parsimonious feature set is received, REAL-x performance could degrade due to out-of-distribution (OOD) shift \citep{hooker2019benchmark}.

We have also added, post-hoc, the F1 instance-wise feature importance scores to the graphs to demonstrate the relationship between the AUROC used in tuning and the underlying IWFS metric we are trying to optimize. Note that there is a strong correlation between Hide\&Seek's AUROC and the IWFS F1 results. Specifically, the AUROC initially increases as parsimony increases, implying that the model's predictions could be benefiting from discarding unimportant features. Conversely, the highest AUROC for REAL-x occurs when it has the most features available. INVASE has instances of high prediction accuracy corresponding to a low F1 score. The graphs of AUPRC vs $\lambda$ displayed similar patterns.

\begin{figure}[H]
    \centering
    \begin{subfigure}[b]{0.48\textwidth}
        \centering
        \includegraphics[width=\textwidth]{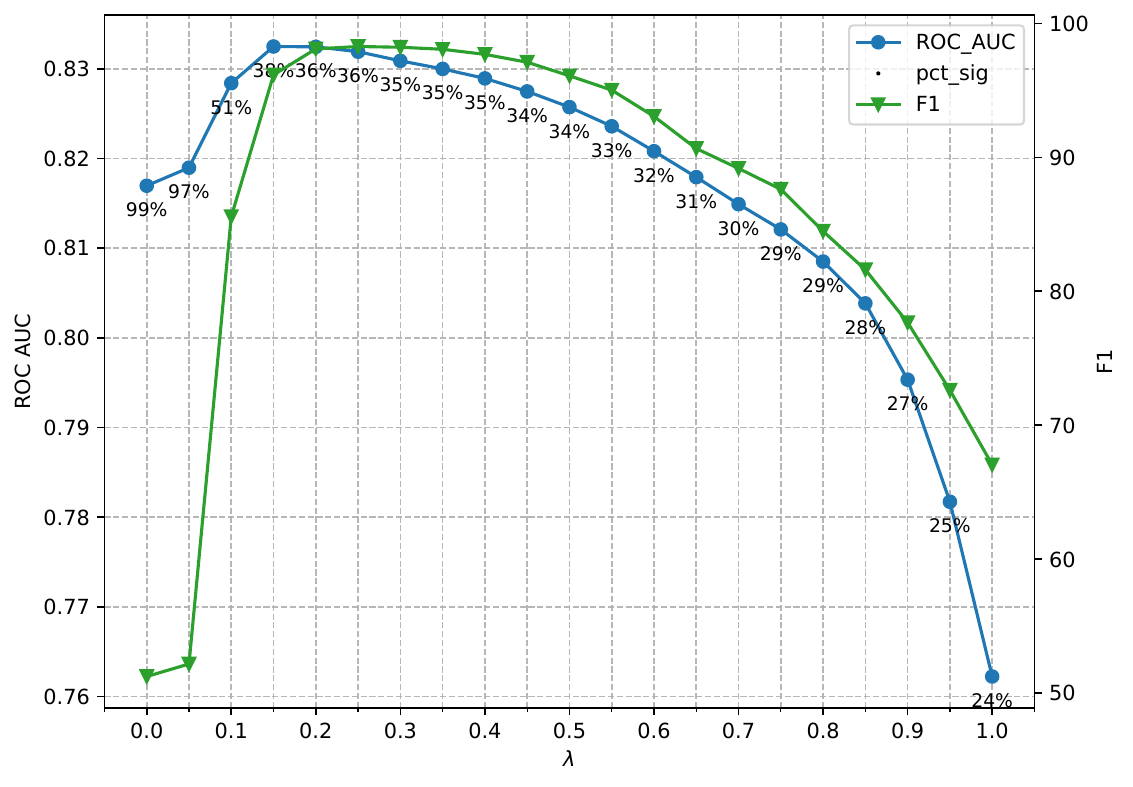}
        \caption{Hide\&Seek}
        \label{fig:tuning_hs}
    \end{subfigure}
    \hfill
    \begin{subfigure}[b]{0.48\textwidth}
        \centering
        \includegraphics[width=\textwidth]{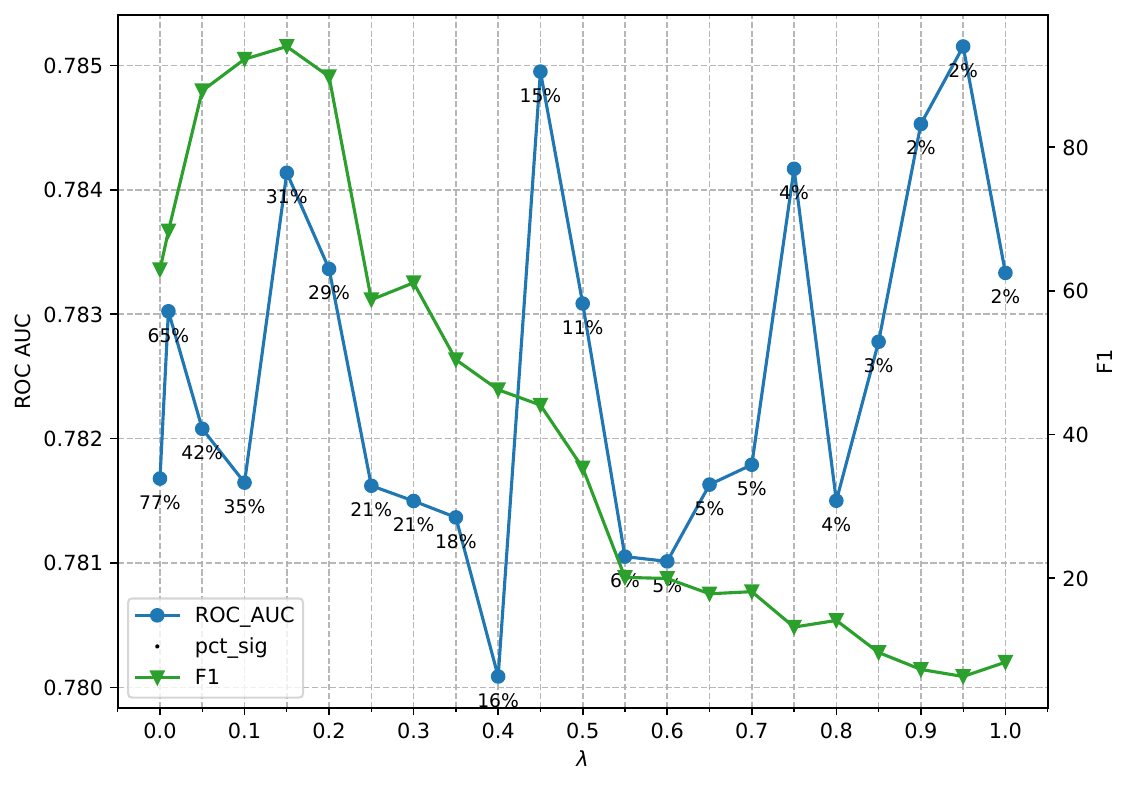}
        \caption{INVASE}
        \label{fig:tuning_inv}
    \end{subfigure}
    
    \vspace{1em} 
    
    \begin{subfigure}[b]{0.48\textwidth}
        \centering
        \includegraphics[width=\textwidth]{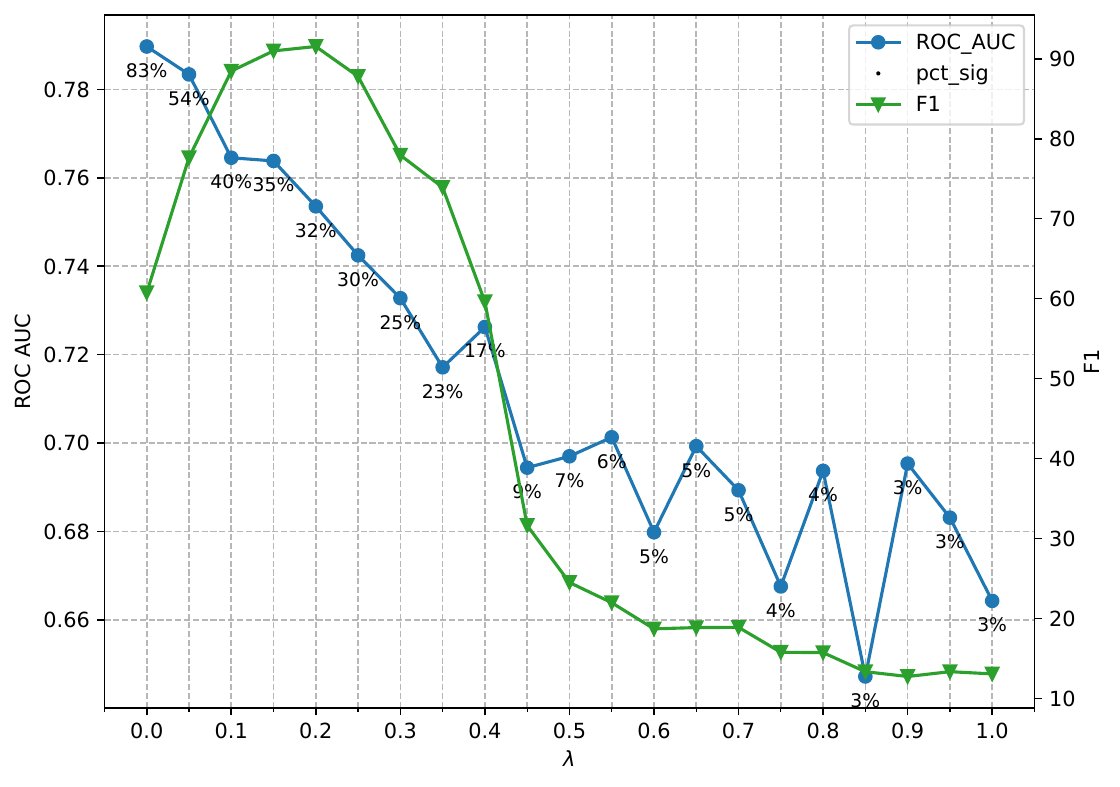}
        \caption{REAL-x}
        \label{fig:tuning_rlx}
    \end{subfigure}
    
    \caption{$\lambda$ sensitivity across different models, with AUROC (predicting $Y$) and F1 (identifying important features) averaged over Syn1-6.}
    \label{fig:all_tuning_performance}
\end{figure}

The INVASE results reported in \Cref{subsec:synthetic_data} use $\lambda=0.1$, which is the value used in the original INVASE paper for the same synthetic data experiments. As shown, this corresponds to high IWFS F1 performance. If, instead, the $\lambda$ with highest AUROC had been selected ($\lambda=0.95$ or $\lambda=0.45$), the INVASE IWFS results would have been lower.

\clearpage

\subsection{Results for $k=3$ and $k=4$}
\label{app:k=3_4}
For SHAP, LIME, L2X, LASSO, and RForest, the top $k$ important features need to be specified.
 In the experiments of the main text, $k$ is chosen based on the number of ground truth important features sought for each dataset. Specifically, $k = 2$ for Syn1, $k = 4$ for Syn2--3 and $k = 5$ for Syn4--6. This may overestimate the FDR for Syn4 and Syn5, which have only 3 important features when $X_{11}<0$. To account for this, SHAP, LIME and L2X results for $k = 3$ and $k = 4$ are shown in 
 Table~\ref{tab:syn4_syn5_tpr_fdr}.

\begin{table}[H]
\centering
\small
\caption{TPR and FDR for Syn4--5 for different values of $k$, as explained in Section~\ref{subsec:synthetic_data}. Each metric is the median of 20 experiments.}
\begin{tabular}{l|c|cc|cc}
\hline
Model & k & \multicolumn{2}{c|}{Syn4} & \multicolumn{2}{c}{Syn5} \\
      &   & TPR & FDR & TPR & FDR \\
\hline
Hide\&Seek &  & 99 & 4 & 97 & 3 \\
\hline
SHAP  & 5 & 72 & 38 & 74 & 38 \\
      & 4 & 60 & 35 & 63 & 34 \\
      & 3 & 46 & 34 & 48 & 32 \\
LIME  & 5 & 54 & 51 & 50 & 54 \\
      & 4 & 40 & 50 & 41 & 51 \\
      & 3 & 30 & 50 & 30 & 50 \\
L2X   & 5 & 44 & 65 & 53 & 57 \\
      & 4 & 36 & 65 & 45 & 55 \\
      & 3 & 27 & 65 & 35 & 53 \\
\hline
\end{tabular}
\label{tab:syn4_syn5_tpr_fdr}
\end{table}

\subsection{Run Time Analysis}
\label{app:time}

Table~\ref{tab:run_times} reports 
run times for the instance-wise feature selection models.
INVASE's training time is substantially longer than other methods, due to its REINFORCE architecture. REAL-x employs differentiable training using REBAR gradients \citep{tucker2017rebar} but requires separate training of the selector and predictor networks. There is also a significant difference in model complexity. INVASE uses $\approx$100k parameters, REAL-x uses $\approx$57k and Hide\&Seek uses $\approx$3k. Hide\&Seek does not use batching, which is present in REAL-x, INVASE and L2X. See Appendix~\ref{app:model_infrastructure} for further detail on model designs.

\begin{table}[H]
\centering
\caption{Typical  model run times on the synthetic data, with IWFS results reported in \cref{tab:tpr_fdr_transposed}. Times include training (10{,}000 samples), prediction, and feature attribution (10{,}000 samples). All models were run on identical hardware, described in Appendix~\ref{app:hardware}.}
\begin{tabular}{l c}
\hline
\textbf{Method} & \textbf{Run time (hh:mm:ss)} \\
\hline
SHAP      & 00:00:02 \\
L2X       & 00:00:03 \\
Hide\&Seek & 00:00:05 \\
REAL-x    & 00:01:16 \\
LIME      & 00:10:54 \\
INVASE    & 01:18:52 \\
\hline
\end{tabular}
\label{tab:run_times}
\end{table}

\subsection{Syn3 - Specification vs. Implementation }
\label{app:syn3}

We note a minor discrepancy between the specification of model Syn3 in the previous works \citep{yoon2018invase, chen2018learning} and the code linked in their publication. For our experiments, we use the data-generating model of the previous code, so that the results are comparable. 

\begin{table}[H]
\centering
\caption{Paper and code expressions for Syn3 model in INVASE, L2X, and Hide\&Seek.}
\begin{tabular}{l l l}
\hline
\textbf{Method} & \textbf{Source} & \textbf{Syn3} \\
\hline
INVASE          & Paper & $-10 \sin \bigl(2 X_7\bigr) + 2 |X_8| + X_9 + \exp \bigl(-X_{10}\bigr)$ \\[2pt]
                & Code  & $-10 \sin \bigl(0.2 X_7\bigr) + |X_8| + X_9 + \exp \bigl(-X_{10}\bigr) - 2.4$ \\[4pt]

L2X             & Paper & $-100 \sin \bigl(2 X_1\bigr) + 2 |X_2| + X_3 + \exp \bigl(-X_{4}\bigr)$ \\[2pt]
                & Code  & $-100 \sin \bigl(0.2 X_1\bigr) + |X_2| + X_3 + \exp \bigl(-X_{4}\bigr) - 2.4$ \\[4pt]

Hide\&Seek      & Both  & $-10 \sin \bigl(0.2 X_7\bigr) + |X_8| + X_9 + \exp \bigl(-X_{10}\bigr) - 2.4$ \\[2pt]
\hline
\end{tabular}
\label{tab:syn3_differences}
\end{table}


\subsection{Parsimony-Weight Annealing Analysis} 
\label{app:annealing_analysis}
As outlined in \cref{subsec:parsimony_annealing}, the annealing schedule for $\lambda_t$ over $t$ epochs is $\lambda_{t} = \left(\frac{t}{T} \right)^{q}\lambda_{\text{max}}$, where $q=2$ for all our experiments. \Cref{tab:q_sensitivity} shows the stability of the annealing schedule for different choices of $q$. It includes a mix of instance-wise feature selection (TPR, FDR, F1) and prediction (AUROC) metrics. We see that for square root and linear growth, $q \in \{0.5, 1\}$, the results are poor, as expected. However, for $q \in \{2,3,4,5\}$, the results are stable. This is because the values in the second set allow the model to emphasize prediction accuracy early in training and parsimony later, as demonstrated in \Cref{fig:val_losses_specific_lambda}. As $q$ increases (for a fixed $\lambda_{\text{max}}$), the number of epochs spent in larger values of $\lambda_t$ decreases, resulting in less parsimony.

\begin{table}[H]
\centering
\setlength{\tabcolsep}{4pt}
\caption{Sensitivity analysis for choices of $q$ in $\lambda_{t} = \left(\frac{t}{T} \right)^{q}\lambda_{\text{max}}$. TPR, FDR and F1 are instance-wise feature selection metrics and AUROC is a prediction metric. Each value represents the mean across the six synthetic datasets in \Cref{subsec:synthetic_data} and 20 seeds. $\lambda_{\text{max}}=0.3$ for all runs.}
\begin{tabular}{l|c|c|c|c}
\hline
\multicolumn{1}{c|}{$q$} 
& \multicolumn{1}{c|}{TPR}
& \multicolumn{1}{c|}{FDR}
& \multicolumn{1}{c|}{F1}
& \multicolumn{1}{c}{AUROC} \\
\hline
0.5 & 38.34 & 5.33 & 44.45 & 0.68 \\
1.0 & 78.43 & 2.71 & 82.88 & 0.79 \\
2.0 & 97.66 & 2.49 & 97.26 & 0.83 \\
3.0 & 98.53 & 4.30 & 96.67 & 0.83 \\
4.0 & 98.78 & 5.44 & 96.13 & 0.83 \\
5.0 & 98.88 & 6.65 & 95.44 & 0.83 \\
\hline
\end{tabular}
\label{tab:q_sensitivity}
\end{table}

\Cref{fig:val_losses_multiple_scenarios} shows the impact of different choices of $q=\{0,1,2,3\}$ on the loss function \eqref{eq:loss_hide_seek} for different values of $\lambda_{\text{max}}$. Each row shows (a) The cross-entropy term $\sum_{c=1}^C y_{c} \log \hat{y}_{c}$ vs t. (b) The regularized parsimony term $\frac{\lambda_t}{d}  \|\m\|_1$ vs t. (c) The combined loss $\ell(\V{\alpha}, \V{\beta}, t)$ vs t. The metrics are calculated on a hold-out validation set using the same data as in the Syn4 experiment in \cref{subsec:synthetic_data}. \Cref{fig:val_losses_multiple_scenarios} shows that once $\lambda_{\text{max}}$ is set large enough to impose parsimony, the choice $q=2$ provides the best results. It allows Hide\&Seek to prioritize prediction accuracy early and then mask parsimony in later training epochs. It is more stable than $q=3$ and results in a lower final combined loss. Note that when $t=500$, the loss function has the same value for all four values of $q$ and is therefore comparable.

\begin{figure}
    \centering
    \includegraphics[width=0.9\linewidth]{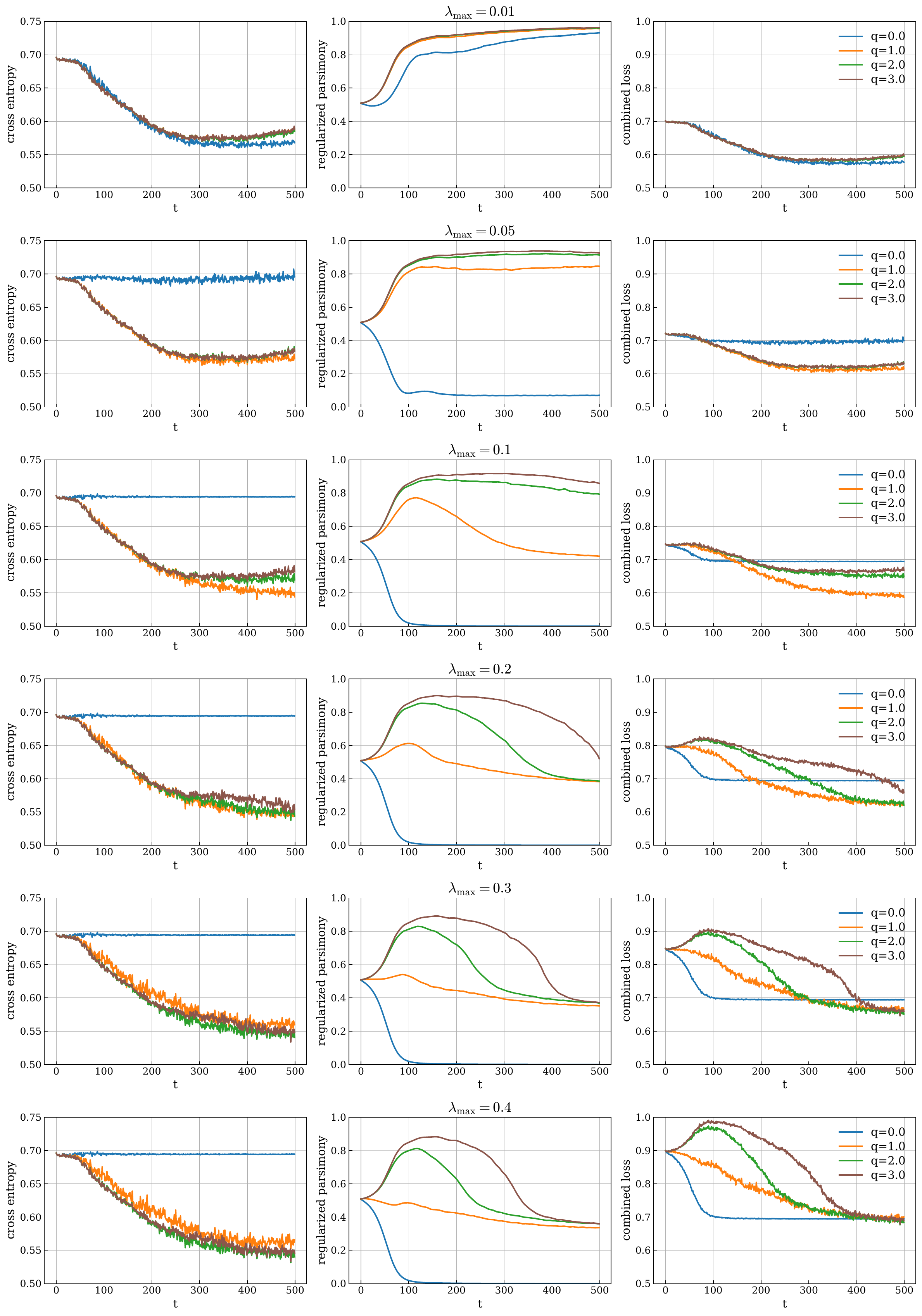}
    \caption{Parsimony-weight annealing analysis. These graphs show the impact of different choices of $q=\{0,1,2,3\}$ in $\lambda_{t} = \left(\frac{t}{T} \right)^{q}\lambda_{\text{max}}$ on the loss function \eqref{eq:loss_hide_seek}, for different values of $\lambda_{\text{max}}$. Each row shows (a) The cross-entropy term $\sum_{c=1}^C y_{c} \log \hat{y}_{c}$ vs t. (b) The regularized parsimony term $\frac{\lambda_t}{d}  \|\m\|_1$ vs t. (c) The combined loss $\ell(\V{\alpha}, \V{\beta}, t)$ vs t. See \cref{app:annealing_analysis} for more detail.}
    \label{fig:val_losses_multiple_scenarios}
\end{figure}

\clearpage
\section{Further experiments}

This section provides the following experiments: training the models on 1,000,000 samples; performance on 100 features; and California Housing data.

\subsection{Training on 1,000,000 Samples}
\label{app:1_000_000_sampes}

Table~\ref{tab:tpr_fdr_transposed_1_000_000} shows the results from a single experiment for each model and dataset, where the training data was 1,000,000 samples. Note that L2X has improved results when trained on more data. Hide\&Seek still outperforms other models.

\begin{table}[H]
\centering
\setlength{\tabcolsep}{4pt}
\caption{Performance of IWFS algorithms on Syn1-6 when trained on 1,000,000 samples.}
\begin{tabular}{l|cc|cc|cc|cc|cc|cc}
\hline
\text{Model} 
& \multicolumn{2}{c|}{Hide\&Seek}
& \multicolumn{2}{c|}{INVASE}
& \multicolumn{2}{c|}{REAL-x}
& \multicolumn{2}{c|}{SHAP}
& \multicolumn{2}{c|}{LIME}
& \multicolumn{2}{c}{L2X} \\
& TPR & FDR & TPR & FDR & TPR & FDR & TPR & FDR & TPR & FDR & TPR & FDR \\
\hline
Syn1 & \textbf{100} & \textbf{0} & \textbf{100} & \textbf{0} & \textbf{100} & \textbf{0} & 98 & 2 & 24 & 76 & \textbf{100} & \textbf{0} \\
Syn2 & \textbf{100} & \textbf{0} & \textbf{100} & \textbf{0} & 87 & 6 & \textbf{100} & \textbf{0} & \textbf{100} & \textbf{0} & \textbf{100} & \textbf{0} \\
Syn3 & 99 & \textbf{0} & \textbf{100} & \textbf{0} & 94 & \textbf{0} & \textbf{100} & \textbf{0} & 98 & 2 & 85 & 15 \\
Syn4 & \textbf{100} & \textbf{1} & 90 & \textbf{1} & 95 & 4 & 70 & 38 & 55 & 49 & 83 & 32 \\
Syn5 & \textbf{98} & \textbf{1} & 84 & \textbf{1} & 89 & \textbf{1} & 73 & 36 & 50 & 53 & 90 & 29 \\
Syn6 & \textbf{98} & \textbf{1} & 90 & \textbf{1} & 95 & 6 & 76 & 24 & 51 & 49 & 91 & 9 \\
\hline
\end{tabular}
\label{tab:tpr_fdr_transposed_1_000_000}
\end{table}

Note that unlike Hide\&Seek and INVASE, we found that the parsimony regularizer $\lambda$ in REAL-x had to be tuned down as the number of training samples grew from $N = 10,000$ (as in \cref{subsec:synthetic_data}) to $N=1,000,000$. 


\subsection{Training on 100 Features and Ensembling}

We conduct an experiment in which we increase the number of synthetic features from 11 to 100. The relationship between features remains as described in Section~\ref{subsec:synthetic_data}. There are now an additional 89 noise signals.

Additionally, we introduce Hide\&Seek$_{\text{ens}}$, an extension of our base architecture that leverages ensembling and column subsampling. Taking advantage of the model's fast training, we train an ensemble of 10 independent models, each observing a random 90\% subset of the features. Instance-wise feature importance is then found by averaging the masks across the ensemble and applying the standard $>0.5$ selection threshold.

We also present two runs of INVASE, with no architectural change, showing the results on two values of $\lambda$. As noted in \Cref{app:tuning}, INVASE is hard to tune and the optimal $\lambda$ might not be easily read off a $\lambda-\text{AUROC}$ tuning curve (see \Cref{fig:tuning_inv} for an example). In \Cref{tab:f1_100_transposed}, INVASE corresponds to the logical choice of $\lambda$ during tuning (1.2) while INVASE$_{\text{ideal}}$ corresponds to the results if the ideal $\lambda$ (0.6) was known.

In the results, we see that the base Hide\&Seek is competitive. Hide\&Seek$_{\text{ens}}$ outperforms all other models and is as good as, if not better than, the ideal INVASE. These results show the applicability of Hide\&Seek to large datasets and outline that ensembling and column subsampling could be useful in improving results.

\begin{table}[H]
\centering
\setlength{\tabcolsep}{4pt}
\caption{F1 values for high-dimensional (100 features) synthetic datasets. The target is the same function of $\{X_1,\dots,X_{11}\}$ as in the earlier experiments. This experiment includes 89 extra unimportant features of independent noise. Each F1 value is the median of 20 experiments. Hide\&Seek$_{\text{ens}}$ is an ensemble of 10 independent Hide\&Seek models with 90\% column subsampling and INVASE$_{\text{ideal}}$ represents INVASE's performance if the ideal $\lambda$ is known.}
\begin{tabular}{l|c|c|c|c|c|c|c|c}
\hline
\text{Model} 
& \multicolumn{1}{c|}{Hide\&Seek}
& \multicolumn{1}{c|}{Hide\&Seek$_{\text{ens}}$}
& \multicolumn{1}{c|}{INVASE}
& \multicolumn{1}{c|}{INVASE$_{\text{ideal}}$}
& \multicolumn{1}{c|}{REAL-x}
& \multicolumn{1}{c|}{SHAP}
& \multicolumn{1}{c|}{LIME}
& \multicolumn{1}{c}{L2X} \\
& F1 & F1 & F1 & F1 & F1 & F1 & F1 & F1 \\
\hline
Syn1 & 90 & 100 & 0 & 100 & 48 & 14 & 2 & 2 \\
Syn2 & 96 & 100 & 100 & 100 & 72 & 92 & 100 & 10 \\
Syn3 & 97 & 99 & 100 & 100 & 93 & 88 & 78 & 4 \\
Syn4 & 66 & 73 & 53 & 63 & 68 & 56 & 25 & 4 \\
Syn5 & 77 & 79 & 63 & 75 & 68 & 53 & 25 & 4 \\
Syn6 & 70 & 75 & 84 & 85 & 68 & 72 & 40 & 5 \\
\hline
\end{tabular}
\label{tab:f1_100_transposed}
\end{table}

\subsection{California Housing}
\label{app:california_housing}

We present a new experiment that evaluates the models against correlated real-world features in a semi-synthetic setting. This experiment includes a switch feature with three partitions. The California Housing dataset contains information on housing block groups in California from the US 1990 Census \citep{pace1997sparse}. Each block group represents an average of 1425.5 people living in proximity. There are 20,640 block groups. We use the following variables: \textit{Median Income, Median House Age, Average Rooms, Average Bedrooms, Population, Occupancy} (average members per household) and \textit{Longitude}. 

We establish a ground truth feature importance by sampling $Y$ from a Bernoulli distribution: $P(Y=1|U) = \frac{1}{1+e^{U}}$, where the feature importance is based on the switch-feature \textit{Longitude}, with three geographic partitions:

\begin{itemize}[leftmargin=*]
    \item Where $\text{Longitude} < -121.5$: $U = \text{Average Rooms} - \text{Average Bedrooms}$
    \item Where $-121 \leq \text{Longitude} < -118$: $U = \frac{\text{Population}}{\text{Occupancy}}$
    \item Where $\text{Longitude} \geq -118.$: $U = 5 \times{\text{Median Income}} - 2\times(\text{Median House Age})^2$
\end{itemize}

We split the data into 12,828 training samples, 1,604 validation samples and 1,604 test samples. Features were standardized using the training data. For Hide\&Seek, REAL-x and INVASE, we tuned $\lambda$ across [0, 0.05, 0.1, 0.25, 0.5, 0.75, 1, 1.25, 1.5]. We then assessed each model's IWFS performance in recovering the ground truth features (housing attributes and longitude). The results are shown in \Cref{tab:california_housing}.

\begin{table}[H]
\centering
\setlength{\tabcolsep}{4pt}
\caption{IWFS performance on California Housing data. Each TPR, FDR and F1 metric is the median of 10 runs of each model on the same test data.}
\begin{tabular}{l|c|c|c|c|c|c}
\hline
& \multicolumn{1}{c|}{Hide\&Seek}
& \multicolumn{1}{c|}{INVASE}
& \multicolumn{1}{c|}{REAL-x}
& \multicolumn{1}{c|}{SHAP}
& \multicolumn{1}{c|}{LIME}
& \multicolumn{1}{c}{L2X} \\
\hline
TPR & 97 & 84 & 79 & 74 & 56 & 52 \\
FDR & 17 & 17 & 10 & 26 & 44 & 48 \\
F1  & 88 & 83 & 83 & 74 & 56 & 52 \\
\hline
\end{tabular}
\label{tab:california_housing}
\end{table}

\section{Extra detail}
\label{app:extra_detail}
This section provides further detail on the non-synthetic data experiments of the paper.

\subsection{Credit Default Data Detail}
\label{app:credit_default}

The features in the Credit default data experiment in \Cref{subsec:semi_synthetic_data} are: LIMIT\_BAL, BILL\_AMT1, BILL\_AMT2, BILL\_AMT3, BILL\_AMT4, BILL\_AMT5, BILL\_AMT6, PAY\_AMT1, PAY\_AMT2, PAY\_AMT3, AGE, PAY\_AMT4, PAY\_AMT5, PAY\_AMT6, SEX, EDUCATION, MARRIAGE, PAY\_0, PAY\_2, PAY\_3, PAY\_4, PAY\_5 and PAY\_6. The binary target for the raw data is \emph{default payment next month}, which is not used in Syn4--6. See \url{https://archive.ics.uci.edu/dataset/350/default+of+credit+card+clients} for more details. The correlation matrix is shown in \Cref{fig:credit_corr_mat}.

\begin{figure}
    \centering
    \includegraphics[width=0.5\linewidth]{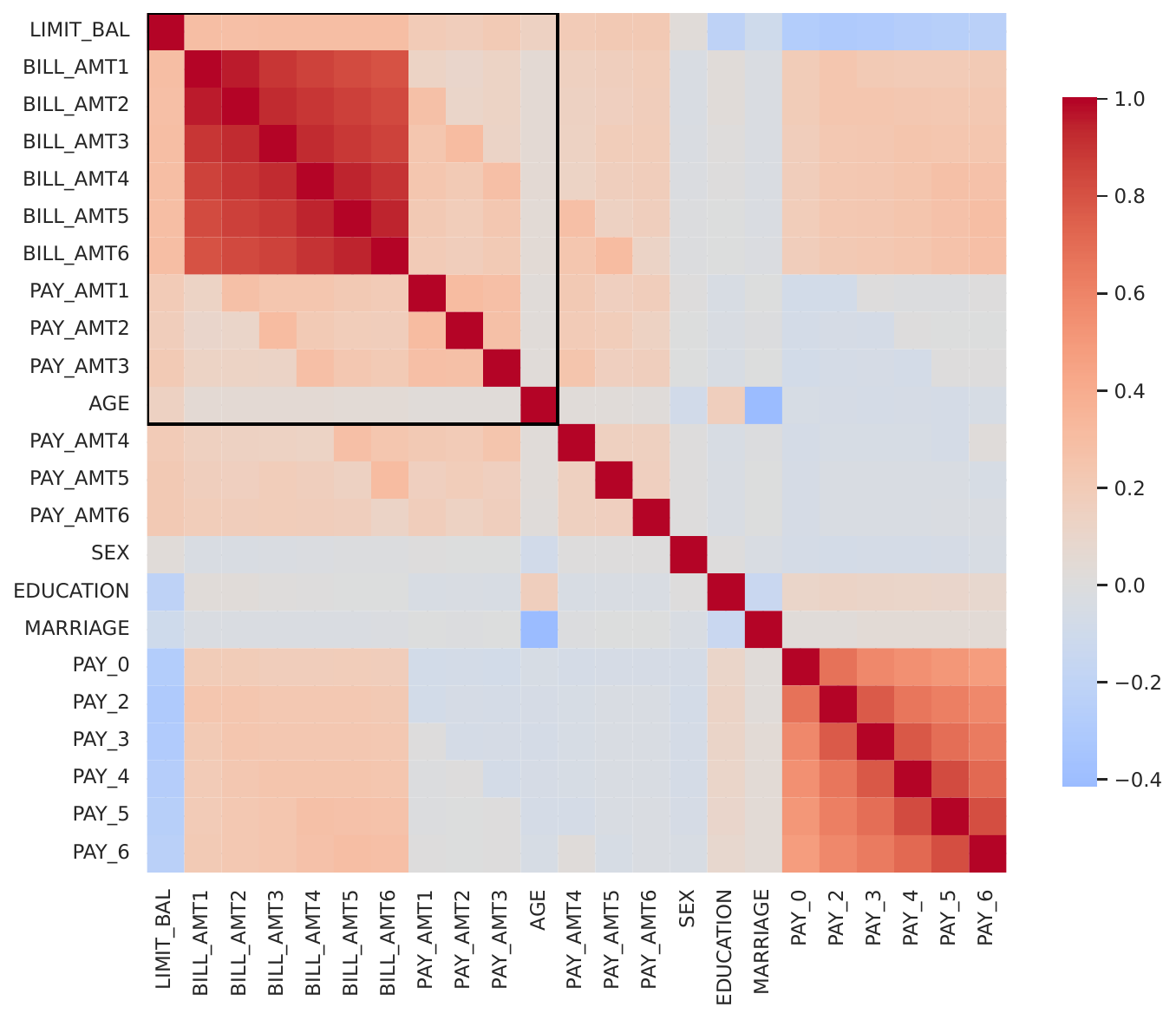}
    \caption{The correlation matrix for the 23 input features in the Credit default data experiments. The 11 features surrounded by the black box are used for $\{X_1,\dots,X_{11}\}$ in generating Syn4--6 in \Cref{subsec:semi_synthetic_data}.}
    \label{fig:credit_corr_mat}
\end{figure}

\subsection{MNIST Detail}
\label{app:mnist}

In the MNIST experiment in \cref{subsec:mnist}, the hyperparameter $\lambda$ for Hide\&Seek, INVASE and REAL-x was tuned over $\{0.05, 0.1, 0.2, 0.3, 0.4, 0.5\}$. We also compared two scaling methods: global rescaling to the $[0,1]$ range and feature-wise standardization to zero mean and unit variance (using the training set). The settings associated with the highest prediction accuracy were selected. For Hide\&Seek, zero mean and unit-variance scaling resulted in a higher prediction accuracy for all $\lambda$ values. $\lambda=0.05$ was used, although explanation patches were largely similar for all 6 values.

\subsection{Breast Cancer Subtype Classification Detail}
\label{app:tcga}

\Cref{tab:100_genes} contains a list of the 100 genes used in the experiment in \cref{subsec:tcga}. \Cref{tab:gene_importance_summary} shows the top 10 genes identified by each model, ranked by their mean importance scores. \Cref{tab:gene_importance_full_detailed} provides more detail inclduing standard errors.

\begin{table}
\centering
\scriptsize
\caption{100 genes used for analysis in \cref{subsec:tcga} to match the same experiment as in \citet{covert2021explaining}.}
\begin{tabular}{lllllllll}
\hline
\multicolumn{9}{c}{\textbf{100 genes}} \\
\hline
OSTbeta & STATH & MAPK10 & PLEKHG5 & ERO1L & ZNF711 & ZNF385 & OR52E8 & SLC5A11 \\
P4HA3 & LHFPL4 & MGC33657 & CAPZB & RBM15B & C1orf176 & KLF3 & OLFM4 & NBR2 \\
CCDC64 & NUP210 & HEMGN & SLC25A3 & LEF1 & MVD & OTUD3 & KIAA1949 & SLC44A3 \\
ZNF775 & THY1 & DYNC1I2 & CYP1A1 & SPTA1 & CLEC4M & RXFP3 & TSHR & C7 \\
CRYBB2 & PPAPDC3 & TXNL4B & CHST9 & HACE1 & AYTL1 & PRSS35 & ZNF408 & DDC \\
CSTL1 & OR2F1 & C12orf50 & SH3YL1 & SNUPN & COL25A1 & HPS4 & ZFPM1 & OAS2 \\
TUBA1C & OR8K5 & THSD3 & ATP6V0C & RAB22A & AP1B1 & CTAGE6 & C6orf26 & ESR1 \\
UPK3B & ROBO4 & TMEFF1 & KIAA1279 & ZFP36L1 & GRINA & YTHDF3 & TMCC1 & UBE1DC1 \\
C6orf15 & PDE6A & PEO1 & TMEM52 & PARP1 & GSS & RDH11 & STXBP1 & ACLY \\
TMSB10 & TUBB & LIPK & HRC & C20orf111 & OMA1 & NCAPH2 & GPX2 & BPY2C \\
ZNF324 & CDC27 & CCNB2 & CNOT7 & BIRC3 & GAL3ST3 & PLEKHM1 & SPOCD1 & PENK \\
TAS2R9 &  &  &  &  &  &  &  &  \\
\hline
\end{tabular}
\label{tab:100_genes}
\end{table}

\begin{table}
\centering
\scriptsize
\caption{Detailed gene importance (mean $\pm$ standard error). Importance represents mean mask size (Hide\&Seek, INVASE, REAL-x), mean importance scores (SHAP, LIME), or mean selection frequency (L2X).}
\label{tab:gene_importance_full_detailed}
\begin{tabular*}{\textwidth}{@{\extracolsep{\fill}} lc lc lc @{}}
\toprule
\multicolumn{2}{c}{\textbf{Hide\&Seek}} & \multicolumn{2}{c}{\textbf{INVASE}} & \multicolumn{2}{c}{\textbf{REAL-x}} \\
\cmidrule(lr){1-2} \cmidrule(lr){3-4} \cmidrule(lr){5-6}
Gene & Importance $\pm$ SEM & Gene & Importance $\pm$ SEM & Gene & Importance $\pm$ SEM \\
\midrule
ESR1    & $0.999 \pm 0.000$ & CCNB2     & $0.723 \pm 0.037$ & ESR1    & $0.969 \pm 0.003$ \\
CCNB2   & $0.951 \pm 0.014$ & ESR1      & $0.711 \pm 0.040$ & CCNB2   & $0.897 \pm 0.006$ \\
STATH   & $0.827 \pm 0.018$ & ZNF775    & $0.680 \pm 0.039$ & NUP210  & $0.833 \pm 0.010$ \\
C6orf26 & $0.804 \pm 0.017$ & KLF3      & $0.619 \pm 0.038$ & C6orf15 & $0.758 \pm 0.013$ \\
TUBB    & $0.784 \pm 0.024$ & C6orf15   & $0.610 \pm 0.040$ & SLC25A3 & $0.754 \pm 0.010$ \\
C7      & $0.773 \pm 0.022$ & TMSB10    & $0.597 \pm 0.038$ & SPOCD1  & $0.606 \pm 0.011$ \\
NCAPH2  & $0.734 \pm 0.021$ & NCAPH2    & $0.586 \pm 0.032$ & C6orf26 & $0.593 \pm 0.015$ \\
UPK3B   & $0.727 \pm 0.033$ & C20orf111 & $0.570 \pm 0.036$ & TUBB    & $0.506 \pm 0.015$ \\
PARP1   & $0.710 \pm 0.036$ & NUP210    & $0.562 \pm 0.041$ & OR52E8  & $0.502 \pm 0.019$ \\
HACE1   & $0.680 \pm 0.031$ & CAPZB     & $0.553 \pm 0.036$ & HACE1   & $0.498 \pm 0.014$ \\
\midrule \midrule
\multicolumn{2}{c}{\textbf{SHAP}} & \multicolumn{2}{c}{\textbf{L2X}} & \multicolumn{2}{c}{\textbf{LIME}} \\
\cmidrule(lr){1-2} \cmidrule(lr){3-4} \cmidrule(lr){5-6}
Gene & Importance $\pm$ SEM & Gene & Importance $\pm$ SEM & Gene & Importance $\pm$ SEM \\
\midrule
ESR1    & $0.695 \pm 0.029$ & PENK    & $0.640 \pm 0.048$ & TUBB     & $0.044 \pm 0.004$ \\
CCNB2   & $0.402 \pm 0.016$ & BIRC3   & $0.600 \pm 0.049$ & HACE1    & $0.044 \pm 0.005$ \\
C6orf15 & $0.212 \pm 0.007$ & TMEM52  & $0.590 \pm 0.049$ & C6orf26  & $0.038 \pm 0.004$ \\
ZNF385  & $0.126 \pm 0.007$ & HPS4    & $0.590 \pm 0.049$ & PENK     & $0.033 \pm 0.004$ \\
NUP210  & $0.121 \pm 0.009$ & OTUD3   & $0.590 \pm 0.049$ & ESR1     & $0.033 \pm 0.004$ \\
TMSB10  & $0.077 \pm 0.006$ & CAPZB   & $0.590 \pm 0.049$ & C7       & $0.032 \pm 0.004$ \\
C6orf26 & $0.071 \pm 0.004$ & C6orf26 & $0.580 \pm 0.050$ & CCNB2    & $0.031 \pm 0.003$ \\
C7      & $0.069 \pm 0.004$ & STXBP1  & $0.580 \pm 0.050$ & KIAA1949 & $0.029 \pm 0.003$ \\
HACE1   & $0.064 \pm 0.006$ & ACLY    & $0.580 \pm 0.050$ & NCAPH2   & $0.029 \pm 0.004$ \\
GPX2    & $0.062 \pm 0.004$ & COL25A1 & $0.580 \pm 0.050$ & OAS2     & $0.029 \pm 0.003$ \\
\bottomrule
\end{tabular*}
\end{table}


\end{document}